\theoremstyle{plain}
\newtheorem{theorem}{Theorem}
\newtheorem{lemma}[theorem]{Lemma}
\newtheorem{corollary}[theorem]{Corollary}
\theoremstyle{definition}
\theoremstyle{remark}
\newtheorem{remark}[theorem]{Remark}
\newtheorem*{theorem*}{Theorem}
\newtheorem*{lemma*}{Lemma}
\newtheorem*{definition*}{Definition}
\newtheorem*{corollary*}{Corollary}
\newtheorem*{remark*}{Remark}
\DeclareMathOperator*{\E}{\mathbb{E}}
\newcommand{\s}{\mathcal{S}}
\newcommand{\A}{\mathcal{A}}
\newcommand{\M}{\mathcal{M}}
\newcommand{\T}{\mathcal{T}}
\def\vocabRWDchange{reward varying}{}
\def\vocabRWDchangeCAP{Reward Varying}{}
\def\vocabDYNchange{dynamics varying}{}
\def\vocabDYNchangeCAP{Dynamics Varying}{}
\def\vocabAppendix{Appendix}{} 
\def\vocabEq{Equation}{}
\def\vocabunreg{un-regularized}{}
\def\RewardChangeTheorem#1{
Let a task $\T$ with reward function $r$ be given, with the optimal value function $Q^*$ and corresponding optimal policy $\pi^*$. Consider a \vocabRWDchange{} task, $\widetilde{\T}$ with reward function $\widetilde{r}$, with an unknown optimal action-value function, $\widetilde{Q}^*$. Define $\kappa(s,a,s') \doteq \widetilde{r}(s,a, s') - r(s,a,s')$.\\
Denote the optimal action-value function $K^*$ as the solution of the following Bellman optimality equation
\begin{equation}\label{eq:K_backup1#1}
    K^{*}(s,a) = \E_{s' \sim{} p}\left[ \kappa(s,a,s') + \frac{\gamma}{\beta}  \log \E_{a' \sim{} \pi^*} e^{\beta K^{*}(s',a')} \right]
\end{equation}
and its corresponding state-value function
\begin{equation}
    V_K^*(s) = \frac{1}{\beta}\log \sum_a \pi^*(a\vert s)e^{\beta K^*(s,a)}
\end{equation}
Then, \begin{equation}\label{eq:Q=Q+K rwd_change#1}
    \widetilde{Q}^*(s,a) = Q^*(s,a) + K^*(s,a)
\end{equation}
\begin{equation}\label{eq:vtilda=v+vk for rwd change#1}
    \widetilde{V}^*(s) = V^*(s) + V_K^*(s)
\end{equation}
and 
\begin{equation}\label{eq:rwd change policies are the same#1}
    \widetilde{\pi}^*(a\vert s) = \pi^*_K(a\vert s)
\end{equation}
for all $s \in \s,\ a \in \A$.
}
\def\PolicyLemma#1{
Suppose the task $\T=\langle \s,\A,p,r,\gamma, \beta, \pi_0 \rangle$ is given with associated optimal value function $Q^*$ and optimal policy $\pi^*$. For a reward function
\begin{equation}\label{eq: change reward for prior policy adjustment#1}
    \widetilde{r}(s,a,s') = r(s,a,s') + \frac{1}{\beta}\log \frac{\pi_0(a\vert s)}{\pi_1(a\vert s)}
\end{equation}
the task $\widetilde{\T}=\langle \s,\A,p,\widetilde{r},\gamma, \beta, \pi_1 \rangle$
has optimal value functions
\begin{equation}
    \widetilde{Q}^*(s,a) = Q^*(s,a) + \frac{1}{\beta} \log \frac{\pi_0(a\vert s)}{\pi_1(a\vert s)}
\end{equation}
\begin{equation}
    \widetilde{V}^*(s) = V^*(s)
\end{equation}
and the optimal policies of $\T$ and $\widetilde{\T}$ are equal:
\begin{equation}
    \pi^*(a\vert s) = \widetilde{\pi}^*(a\vert s)
\end{equation}
for all $s \in \s, a \in \A$. 
}
\def\RewardShapingCorollary#1{
Let \vocabRWDchange{} tasks $\T$ and $\widetilde{\T}$ be given with corresponding solutions $(Q^*,V^*, \pi^*)$ and $(\widetilde{Q}^*, \widetilde{V}^*,\widetilde{\pi}^*)$.
The optimal value function for another \vocabRWDchange{} task, $\bar{\T}$ with the reward function 
\begin{align}\label{eq:rwd_shaping#1}
    \bar{r}(s,a,s')
    &= \widetilde{r}(s,a,s') + \gamma V^*(s') - V^*(s)
\end{align}
is given by
\begin{align}
    \bar{Q}^*(s,a) &= \widetilde{Q}^*(s,a) - Q^*(s,a) + \frac{1}{\beta}\log \frac{\pi^*(a \vert s)}{\pi_0(a \vert s)}\\
    &= \widetilde{Q}^*(s,a) - V^*(s)
\end{align}
and the corresponding optimal policy for $\bar{\T}$ is 
\begin{equation}
    \bar{\pi}^*(a \vert s) = \widetilde{\pi}^*(a \vert s).
\end{equation}}
\def\CaoLemma#1{
For a fixed policy $\pi(a\vert s) > 0$, discount factor $\gamma \in [0, 1)$, and an arbitrary choice of function $v \colon \s \to \mathbb{R}$,  there is a unique corresponding reward function

\begin{equation}\label{eq:cao thm 1#1}
    R(s,a,s') = \frac{1}{\beta}\log\frac{\pi(a\vert s)}{\pi_0(a\vert s)} +v(s) - \gamma v(s') 
\end{equation}
such that the task with reward $R$ yields an optimal value function $V^*(s) = v(s)$ and optimal policy $\pi^* = \pi$.
}
\def\PBRSTheorem#1{
Given task $\T = \langle \s,\A,p,r,\gamma, \beta, \pi_0 \rangle$ with optimal policy $\pi^*$ in entropy-regularized RL, then the \vocabRWDchange{} task $\T'$ 
with reward function
\begin{equation}\label{eq:eq for rewards in thm rwd shaping#1}
    r'(s,a,s')=r(s,a,s') + \gamma \Phi(s') - \Phi(s) 
\end{equation}
has the optimal policy $\widetilde{\pi}^* = \pi^*$, and its optimal value functions satisfy
\begin{equation}\label{eq:in thm rwd shaping q = q - phi#1}
    \widetilde{Q}^*(s,a) = Q^*(s,a) - \Phi(s)
\end{equation}
\begin{equation}\label{eq:in thm rwd shaping v = v - phi#1}
    \widetilde{V}^*(s) = V^*(s) - \Phi(s)
\end{equation}
for a bounded, but otherwise arbitrary function $\Phi \colon \s \to \mathbb{R}$.
}
\def\DynamicsChangeTheorem#1{
Let a task $\T = \langle \s,\A,p,r,\gamma, \beta, \pi_0 \rangle$ be given, with optimal value function $V^*$ and corresponding optimal policy $\pi^*$. Let a \vocabDYNchange{} task, $\widetilde{\T}=\langle \s,\A,q,r,\gamma, \beta, \pi_0 \rangle$, be given, with an unknown optimal action-value function $\widetilde{Q}^*(s,a)$. Assume that $r = r(s,a)$.
Denote the optimal action-value function $K^*$ as the solution of the following Bellman optimality equation
\begin{equation}\label{eq:K_p_backup2#1}
    K^{*}(s,a) = \kappa(s,a) + \frac{\gamma}{\beta} \E_{s' \sim{} q} \log \E_{a' \sim{} \pi^*} e^{\beta K^{*}(s',a')}
\end{equation}
where $\kappa$ is the corresponding reward function:
\begin{equation}
    \kappa(s,a)= \gamma \E_{s' \sim{} q} V^*(s') - \gamma \E_{s' \sim{} p} V^*(s').
\end{equation}
where $V^*(s)$ is the optimal state value function for the task defined by dynamics $p(s'\vert s,a)$.

Then, \begin{equation}\label{eq:dynamics Q=Q+K#1}
    \widetilde{Q}^*(s,a) = Q^*(s,a) + K^*(s,a)
\end{equation}
\begin{equation}\label{eq:dynamics V=V+K#1}
    \widetilde{V}^*(s) = V^*(s) + V_K^*(s)
\end{equation}
and 
\begin{equation}\label{eq:dynamics pi=piK#1}
    \widetilde{\pi}^*(a\vert s) = \pi^*_K(a\vert s)
\end{equation}
for all $s \in \s,\ a \in \A$.
}
\def\FreeSolutionsTheorem#1{
Consider the task $\T = \langle \s,\A,p,r,\gamma, \beta, \pi_0 \rangle$ with corresponding optimal value functions $Q^*(s,a), V^*(s)$ and optimal policy $\pi^*$. Then for a reward function
\begin{equation}\label{eq:rwd for free soln dynamics#1}
    \bar{r}(s,a) = r(s,a) - \gamma \E_{s' \sim{} q} V^*(s') + \gamma \E_{s' \sim{} p} V^*(s') 
\end{equation}
the task given by $\bar{\T} = \langle \s,\A,q,\bar{r},\gamma, \beta, \pi_0 \rangle$
has the same optimal action-value function, hence $\bar{V}^*=V^*$ and $\bar{\pi}^*=\pi^*$.
}
\def\CompositionTheorem#1{
Given a set of \vocabRWDchange{} tasks $\{\T^{(m)}\}$ with corresponding optimal value functions $\{Q^{(m)}\}$, denote $\widetilde{Q}^*$ as the optimal action-value function for the composition of $\{\T^{(m)}\}$ under $f$.
Define the value function $K^*$ as the solution of the following Bellman optimality equation

\small
\begin{equation}\label{eq:K in composition#1}
    K^*(s,a) = \E_{s' \sim{} p} \left[ \kappa(s,a,s') + \frac{\gamma}{\beta}  \log \E_{a' \sim{} \pi_f} e^{\beta K^*(s',a')}\right]
\end{equation}
\normalsize
where $\kappa$ is the corresponding reward function:
\begin{equation}\label{eq:rwd for composition#1}
    \kappa(s,a,s') = \left[ f(\{r^{(m)}\}) + \gamma V_f(s') \right] - f(\{Q^{(m)}(s,a)\}) 
\end{equation}
with the definition
\begin{equation*}
    V_f(s) = \frac{1}{\beta}\log\E_{a\sim{} \pi_0}\exp{\beta f(\{Q^{(m)}(s, a) \})},
\end{equation*}
and $\pi_f$ is the policy derived from $f(\{Q^{(m)}\})$:

\begin{equation}
    \pi_f(a\vert s) = \frac{\pi_0(a\vert s) e^{\beta f(\{Q^{(m)}(s,a)\})}}{e^{\beta V_f(s)}}
\end{equation}
Then, 
\begin{equation}\label{eq:Q=fQ+K#1}
    \widetilde{Q}^*(s,a) = f(\{Q^{(m)}(s,a)\}) + K^*(s,a)
\end{equation}
\begin{equation}
    \widetilde{V}^*(s) = V_f(s) + V_K^*(s)
\end{equation}
and 
\begin{equation}
    \widetilde{\pi}^*(a\vert s) = \pi^*_K(a\vert s)
\end{equation}
for all $s \in \s,\ a \in \A$.
}
\def\RobustRemark#1{
Given a potential-based reward shaping function $F$ as described in Theorem \ref{thm:rwd_shaping_ng}, the relations
\begin{equation}\label{eq:rwd shaping for non-optimal pi on q#1}
    \widetilde{Q}^{\pi}(s,a) = Q^{\pi}(s,a) - \Phi(s)
\end{equation}
\begin{equation}\label{eq:rwd shaping for non-optimal pi on v#1}
    \widetilde{V}^{\pi}(s) = V^{\pi}(s) - \Phi(s)
\end{equation}
also hold for non-optimal policies $\pi$. In particular, if $\pi$ is an $\epsilon$-optimal policy in the reward-shaped task (i.e. $||\widetilde{V}^\pi~-~\widetilde{V}^*||~<~\epsilon$) then, $\pi$ is also $\epsilon$-optimal in the original task (i.e. $||V^\pi-V^*||<\epsilon$).
}
\title{Utilizing Prior Solutions for Reward Shaping and Composition in Entropy-Regularized Reinforcement Learning}
\author{
    Jacob Adamczyk,\textsuperscript{\rm 1} Argenis Arriojas,\textsuperscript{\rm 1} Stas Tiomkin,\textsuperscript{\rm 2} Rahul V. Kulkarni\textsuperscript{\rm 1}
}
\begin{document}
\maketitle
\begin{abstract}
    In reinforcement learning (RL), the ability to utilize prior knowledge from previously solved tasks can allow agents to quickly solve new problems. In some cases, these new problems may be approximately solved by composing the solutions of previously solved primitive tasks (task composition). Otherwise, prior knowledge can be used to adjust the reward function for a new problem, in a way that leaves the optimal policy unchanged but enables quicker learning (reward shaping). In this work, we develop a general framework for reward shaping and task composition in entropy-regularized RL. To do so, we derive an exact relation connecting the optimal soft value functions for two entropy-regularized RL problems with different reward functions and dynamics. We show how the derived relation leads to a general result for reward shaping in entropy-regularized RL. We then generalize this approach to derive an exact relation connecting optimal value functions for the composition of multiple tasks in entropy-regularized RL. We validate these theoretical contributions with experiments showing that reward shaping and task composition lead to faster learning in various settings.
\end{abstract}

\section{Introduction}

Reinforcement learning (RL) is a widely-used approach for training artificial agents to acquire complex behaviors and to engage in long-term decision making \cite{suttonBook}. Despite its great successes for goal-oriented tasks (e.g. board games such as chess and Go \cite{silver2018general}), RL approaches do not fare as well when the tasks change and become more complex. The underlying problem is that RL algorithms are often incapable of effectively reusing previously-acquired knowledge; as a consequence RL agents typically start from scratch when faced with new tasks and require vast amounts of training experience to learn new solutions. Therefore, a key challenge in the field is the development of RL approaches and algorithms  which are able to leverage the solutions of previous tasks for quickly solving a wide variety of new tasks. Developing approaches that enable such ``transfer learning'' is one of the problems we wish to address in the current work, in the context of \textit{entropy-regularized} reinforcement learning \cite{ZiebartThesis}.

A promising approach for transfer learning is based on composing solutions for previously solved tasks to obtain solutions for new tasks. The ability to combine primitive skills to learn more complex behaviors can lead to an exponential increase in the number of new problems that an agent is able to solve \cite{boolean,boolean_stoch}. Correspondingly, there is significant interest in this idea of compositionality of tasks in RL. The observation that entropy-regularized RL provides robust solutions \cite{eysenbach} and simple approaches for composing previous solutions \cite{Haarnoja2018, vanNiekerk, peng_MCP} in specific situations has led to increased interest in this topic. Several exact results for compositionality in entropy-regularized RL have been obtained, however these are based on highly limiting assumptions on the differences between the primitive tasks. The development of more general approaches to compositionality in entropy-regularized RL is currently an important challenge in the field.

Another challenge often encountered by RL agents solving new tasks is the problem of sparse reward signals. For example, if an agent only gains a reward at the end of a long and otherwise non-rewarding trajectory, it may be difficult to learn the optimal policy in this case since the agent must be sufficiently ``far-sighted". The field of reward shaping; wherein rewards are changed in a way that leaves the optimal policy invariant, has been used to address this issue \cite{ng_shaping}. These efforts have primarily focused on the standard RL framework; to the best of our knowledge, the corresponding results for reward shaping in entropy-regularized RL have not yet  been derived. Another related open question that motivates this work is understanding how we can utilize previously obtained solutions to implement reward shaping in entropy-regularized RL.

To address these issues, we focus on the core problem of deriving relations between optimal value functions for two tasks in entropy-regularized RL. Considering the first task as solved and the second as a new unsolved task, the derived relation defines a third task whose optimal value function allows us to solve the new task while leveraging prior knowledge. We show that the optimal policy for the task of interest is the same as the third task's. This observation leads to the derivation of a general result for reward shaping in entropy-regularized RL.
Based on these observations connecting the optimal value functions of two tasks, we derive principled methods of approaching both task composition and reward shaping in entropy-regularized RL. In doing so, we also extend the results of \cite{Hunt} for arbitrary functional transformations of rewards, and show that the theory of potential-based reward shaping of \cite{ng_shaping} also applies in the entropy-regularized RL formulation. By using the derived connection between optimal value functions, we also determine how a solution can remain optimal under new dynamics. Moreover, our results motivate a methodology for using previously acquired skills to learn and shape new entropy-regularized RL tasks.

\section{Prior Work}
There is a significant body of literature studying the problem of reward shaping in standard (\vocabunreg{}) reinforcement learning. This field was initiated by \cite{ng_shaping}, whose work introduced the concept of \textit{potential-based reward shaping} (PBRS). It was shown that PBRS functions are necessary and sufficient to describe the set of reward functions which yield the same optimal policies. In addition, the authors show that shaping is \textit{robust} (in the sense that near-optimal policies are also invariant) and amenable to usage of prior knowledge. Therefore, solutions to previous tasks, expert knowledge, or even heuristics can be used to ``shape" an agent's reward function in order to make a particular RL task easier to solve. One goal of the present work is to extend the results of \cite{ng_shaping} to the domain of entropy-regularized RL.

Although there exist many forms of transfer learning in RL \cite{taylor_survey}, we shall focus on the case of concurrent skill composition by a single agent. In this work, composition refers to the combination of previous tasks through their reward functions. Task composition was introduced in the entropy-regularized setting by \cite{Todorov2009} in the entropy-regularized setting and was advanced further by \cite{Haarnoja2018}. Since composition combines previously solved tasks' reward functions in some specified functional form, it is natural to assume that those previous solutions might also be combined in the same way to obtain an approximate solution to the new task. Indeed, in standard (\vocabunreg{}) RL, it was shown by \cite{boolean} 
that this holds in the case of Boolean compositions. However, for these equalities to hold, there are strong restrictions on the reward functions for the previous tasks: they may only differ on the absorbing (also known as terminal) states. In our work, we shall consider reward functions which can vary globally (over all states and actions) in entropy-regularized RL.

Previous work has shown that, in entropy-regularized RL, applying the same transformation to the solutions of previously solved tasks leads to a useful first-order approximation, depending on the specific transformation. In \cite{Hunt} the authors have derived a specific correction function that can be learned and used to correct this first-order approximation to exactly solve the new task. In this work we shall extend this result (Theorem 3.2 of \cite{Hunt}) to arbitrary functions (rather than only convex linear combinations) of reward functions.

\section{Preliminaries}

We consider the Markov Decision Process (MDP) model to study the entropy-regularized reinforcement learning problem. The MDP, denoted $\M$, consists of a state space $\s$, action space $\A$, transition dynamics $p \colon \s \times \A \times \s \to [0,1]$, (bounded) reward function $r \colon \s \times \A \times \s \to \mathbb{R}$, discount factor $\gamma<1$ and inverse temperature $\beta > 0$. We represent the MDP as a tuple $\M = \langle \s,\A,p,r,\gamma \rangle$. The discount factor $\gamma \in (0,1)$ discounts future rewards and assures convergence of the accumulated reward for an infinitely long trajectory ($T \to \infty$).
In many instances we will also specify the particular \textit{prior policy} $\pi_0 \colon \s \times \A \to (0,1)$, a probability distribution over actions, specifying an initial exploration, data-collection, or behavior policy. We assume that the prior policy is absolutely continuous with respect to any trial policy $\pi$ which ensures that the Kullback-Liebler divergence in \vocabEq{} \eqref{eq:objective_function} is well-defined and bounded.
Although implicit in some cases, we always assume an MDP's reward function is bounded.

The entropy-regularized framework for reinforcement learning augments
the standard reward-maximization objective with an entropic regularization term,
relative to a reference policy $\pi_0$:
\begin{equation}\label{eq:objective_function}
    J(\pi)=\E_{p,\pi}
    \left[ \sum_{t=1}^{T} \gamma^{t-1} \left( r_t - \frac{1}{\beta}
        \log\left(\frac{\pi(a_t|s_t)}{\pi_0(a_t|s_t)} \right) \right) \right]
\end{equation}

This objective leads to optimal policies which remain partially exploratory (depending on the entropic term's weight, $\beta^{-1}$) and robust under perturbations to the rewards and dynamics \cite{eysenbach}. Therefore, entropy-regularized RL presents a useful method for applying reinforcement learning in real-world settings where the dynamics and reward functions may not be known with full precision.

\begin{definition*}[Entropy-Regularized Task]
    An entropy-regularized task (or simply \textbf{task}) is an MDP $\M$ together with an inverse temperature $\beta$ and a prior policy $\pi_0$. We denote a task by $\T = \M \cup \langle \beta, \pi_0\rangle = \langle \s,\A,p,r,\gamma, \beta, \pi_0 \rangle$.
\end{definition*}
In the following sections, we assume the existence of a previously solved task $\T$ (or set of tasks $\{\T\}$), representing the agent's primitive knowledge. The ``solution" to the task $\T$ refers to the optimal soft action-value function (or simply the optimal value function), $Q^*$, satisfying the soft Bellman optimality equation:
\begin{equation}\label{eq:Q_orig}
    Q^*(s,a) = \E_{s' \sim{} p} \left[ r(s,a,s') + \frac{\gamma}{\beta} \log \E_{a' \sim{} \pi_0} e^{ \beta Q^*(s',a')} \right] 
\end{equation}
which can be solved by iterating a Bellman backup equation until convergence:
\small
\begin{equation}\label{eq:bellman_backup}
    Q^{(N+1)}(s,a) = \E_{s' \sim{} p} \left[ r(s,a,s') +
        \frac{\gamma}{\beta} \log \E_{a' \sim{} \pi_0} e^{ \beta Q^{(N)}(s',a')} \right]
\end{equation}
\normalsize
where $Q^{(0)}(s,a)$ is an arbitrary initialization function. Since the soft Bellman operator $\mathcal{B}$ is a contraction \cite{pmlr-v70-haarnoja17a}, any bounded initialization function $Q^{(0)}$ will converge to the optimal value function: $\lim_{k \to \infty} \mathcal{B}^k Q^{(0)} = Q^*$. 
When discussing the ``solution" of the task $\T$ we also refer to the optimal policy derived from $Q^*$:
\begin{equation}
    \pi^*(a|s) = \frac{\pi_0(a|s)e^{\beta Q^*(s,a)}}{\sum_{a'} \pi_0(a'|s)e^{\beta Q^*(s,a')}}
\end{equation}
as well as the optimal state-value function:
\begin{equation}\label{eq:definition of v}
    V^*(s) = \frac{1}{\beta}\log \sum_a \pi_0(a|s)e^{\beta Q^*(s,a)}
\end{equation}

In the following sections we study a solution's dependence on the underlying task's characteristics (namely, its reward function and dynamics). We show that these considerations naturally lead to the subjects of reward shaping and compositionality.

\section{Change of Rewards}
We begin by supposing that an agent has a solution to a single task, $\T$. The agent is then asked to solve a new problem, where only the reward function has changed. Beyond simply solving a new problem, this change in rewards may be caused by an adversary, perturbation, or general transform in the same domain. For example, suppose we have learned to reach a goal state in a maze, but now we are tasked with moving to a new goal in the same maze; as in Figure \ref{fig:rwd_shape_maze}.

We formulate this problem by considering the two tasks $\T = \langle \s,\A,p,r,\gamma, \beta, \pi_0 \rangle$ and  $\widetilde{\T} = \langle \s,\A,p,\widetilde{r},\gamma, \beta, \pi_0 \rangle$ differing only on their reward functions. We take this opportunity to introduce the following definition.

\begin{definition*}[\vocabRWDchangeCAP{} Tasks]
    Consider a set of tasks $\{\T^{(k)}\}_{k=1}^{N}$. If the tasks only vary on their reward functions; that is, they are of the form $\T^{(k)} = \langle \s,\A,p,r^{(k)},\gamma, \beta, \pi_0 \rangle$, then we say the set of tasks $\{\T^{(k)}\}$ is \textbf{\vocabRWDchange{}}.
\end{definition*}
In other words, we restrict our attention to those tasks which share the same state and action spaces, transition dynamics, discount factor, temperature, and prior policy.

With these definitions in place, we first address the following question: Assuming tasks $\T$ and $\widetilde{\T}$ are \vocabRWDchange{}, how can we utilize the solution of $\T$ when solving $\widetilde{\T}$?
The answer to this question is provided by the following theorem.
(The proofs for all results are provided in the \vocabAppendix{}.)

\begin{theorem}\label{thm:rwd_change}
    \RewardChangeTheorem{}
\end{theorem}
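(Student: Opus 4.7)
The plan is to verify by direct substitution that $Q^* + K^*$ satisfies the soft Bellman optimality equation for $\widetilde{\T}$, and then invoke uniqueness from the contraction property of the soft Bellman operator (as recalled right before the theorem). The computations for $\widetilde{V}^*$ and $\widetilde{\pi}^*$ will then follow mechanically from their definitions.

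First I would rewrite the prior policy in terms of $\pi^*$ using $\pi^*(a|s) = \pi_0(a|s) e^{\beta Q^*(s,a)} / e^{\beta V^*(s)}$, i.e. $\pi_0(a|s) e^{\beta Q^*(s,a)} = \pi^*(a|s) e^{\beta V^*(s)}$. This is the pivotal identity that converts expectations under the old prior $\pi_0$ (which govern $\widetilde{\T}$'s Bellman equation) into expectations under $\pi^*$ (which govern the equation defining $K^*$). Substituting candidate $\widetilde{Q}^* = Q^* + K^*$ into the log-sum-exp term gives
\begin{equation*}
\tfrac{1}{\beta}\log \E_{a'\sim \pi_0} e^{\beta(Q^*(s',a')+K^*(s',a'))} \;=\; V^*(s') + \tfrac{1}{\beta}\log \E_{a'\sim \pi^*} e^{\beta K^*(s',a')}.
\end{equation*}

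Next I would plug this back into the RHS of $\widetilde{\T}$'s Bellman equation with reward $\widetilde{r} = r + \kappa$, and group terms into (i) $\E_{s'\sim p}[r(s,a,s') + \gamma V^*(s')]$, which equals $Q^*(s,a)$ by the Bellman equation for $\T$ combined with the definition of $V^*$, and (ii) $\E_{s'\sim p}[\kappa(s,a,s') + (\gamma/\beta)\log \E_{a'\sim \pi^*} e^{\beta K^*(s',a')}]$, which equals $K^*(s,a)$ by its defining equation \eqref{eq:K_backup1}. Hence $Q^* + K^*$ is a fixed point of $\widetilde{\T}$'s soft Bellman operator, and uniqueness of the fixed point gives $\widetilde{Q}^* = Q^* + K^*$.

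For the state-value identity, I would apply the same prior-swap trick inside the definition $\widetilde{V}^*(s) = (1/\beta)\log \sum_a \pi_0(a|s) e^{\beta \widetilde{Q}^*(s,a)}$, immediately producing $V^*(s) + V_K^*(s)$. For the policy identity, substituting $\widetilde{Q}^* = Q^* + K^*$ and $\widetilde{V}^* = V^* + V_K^*$ into the Boltzmann form of $\widetilde{\pi}^*$ lets the factors $\pi_0 e^{\beta Q^*}/e^{\beta V^*}$ collapse to $\pi^*$, leaving exactly $\pi^*_K(a|s) = \pi^*(a|s) e^{\beta K^*(s,a)} / e^{\beta V_K^*(s)}$. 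The only subtle step is the first one — keeping track of which expectation is taken under which policy and seeing that the $e^{\beta V^*(s')}$ factor produced by the prior swap is precisely what merges with $\gamma V^*(s')$ to reconstruct the Bellman equation for $\T$; everything else is algebra.
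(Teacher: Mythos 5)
Your proposal is correct, and it rests on exactly the same pivotal identity as the paper's proof, namely $\pi_0(a\vert s)\,e^{\beta Q^*(s,a)} = \pi^*(a\vert s)\,e^{\beta V^*(s)}$, which swaps the expectation over $\pi_0$ for one over $\pi^*$ and releases the factor $e^{\beta V^*(s')}$ that recombines with $r$ to reconstruct $Q^*$. The difference is in the packaging: the paper proves the relation $\widetilde{Q}^{(N)} = Q^* + K^{(N)}$ by induction along the Bellman backup iterates (with initializations $\widetilde{Q}^{(0)}=Q^*$, $K^{(0)}=0$) and then passes to the limit $N\to\infty$ using contraction of the soft Bellman operator, whereas you verify in one shot that $Q^*+K^*$ is a fixed point of $\widetilde{\T}$'s soft Bellman operator and conclude by uniqueness of that fixed point. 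Your route is shorter and arguably cleaner, but it presupposes the existence of a bounded solution $K^*$ to \vocabEq{} \eqref{eq:K_backup1}; this is fine because that equation is itself a soft Bellman optimality equation (for the task with reward $\kappa$ and prior policy $\pi^*$), hence its operator is also a contraction, but you should say so explicitly. The paper's iterate-level induction buys something your version does not: the intermediate expression before the prior swap (expectation still under $\pi_0$) is exactly what justifies the offline-learning remark in the Main Text, and the step-by-step correspondence $\widetilde{Q}^{(N)} = Q^* + K^{(N)}$ has algorithmic content (learning $K$ by backups tracks learning $\widetilde{Q}$ by backups). Your treatment of $\widetilde{V}^*$ and $\widetilde{\pi}^*$ matches the paper's.
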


Therefore, by directly incorporating the solution of $\T$ into $\widetilde{Q}^*$ we can instead learn an auxiliary function $K^*$ which itself happens to be an optimal action-value function.
We can now use the same soft $Q$-learning algorithms \cite{haarnoja_SAC} for learning this corrective value function ($K^*$) via \vocabEq{} \eqref{eq:K_backup1}. In doing so, we learn the desired optimal value function for the new task: $\widetilde{Q}^*$.

As discussed in \cite{Hunt}, it is also possible to learn a corrective value function strictly offline, by using data collected for a previous task, $\T$. In the \vocabAppendix{}, we show that it is indeed possible to learn $K^*$ using offline updates. In such a setup, the advantage is that one requires no additional samples of the environment (the previous experience can be used with appropriately re-labelled rewards).
While learning the value function $K^*$ of Theorem \ref{thm:rwd_change} we are implicitly solving \textit{two} tasks simultaneously: one task corresponding to reward function $\kappa$ with prior policy $\pi^*$, and another task with a reward function $\widetilde{r}$ and prior policy $\pi_0$. In the following section, we show that the former task can be mapped onto yet another task which also has a prior policy $\pi_0$.

\section{Reward Shaping}
In this section we explore the connection between the result of Theorem \ref{thm:rwd_change} and the field of potential-based reward shaping. \vocabEq{} \eqref{eq:rwd change policies are the same} implies that the optimal policies are the same for two distinct entropy-regularized RL problems.
This is exactly the desired outcome of a shaped reward: the task's optimal policy is invariant to a change in the task's reward function.

However, these two entropy-regularized RL problems use different prior policies and therefore the result is not immediately applicable to reward shaping. To correct for this, we introduce the following lemma which describes how a change in prior policy can be accounted for in the rewards of an entropy-regularized RL task.
\begin{lemma}\label{lem:policy_change}
    \PolicyLemma{}
\end{lemma}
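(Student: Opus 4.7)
The plan is to verify the claimed identities directly by substitution into the soft Bellman optimality equation for the new task $\widetilde{\T}$, using the fact that $\widetilde{\T}$ has prior policy $\pi_1$ (whereas $\T$ has prior $\pi_0$). Since the soft Bellman operator for $\widetilde{\T}$ is a contraction with a unique fixed point, if the proposed expression $\widetilde{Q}^*(s,a) = Q^*(s,a) + \tfrac{1}{\beta}\log\frac{\pi_0(a\vert s)}{\pi_1(a\vert s)}$ satisfies the equation, it must equal the optimal value function. Establishing the state value and optimal policy identities will then reduce to short computations using the definitions in Equations \eqref{eq:definition of v} and the corresponding Boltzmann policy form.

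First I would compute the log-sum-exp term appearing on the right-hand side of the soft Bellman equation for $\widetilde{\T}$, namely
\begin{equation*}
\frac{1}{\beta}\log \E_{a' \sim \pi_1}\!\bigl[e^{\beta \widetilde{Q}^*(s',a')}\bigr]
= \frac{1}{\beta}\log \sum_{a'} \pi_1(a'\vert s')\,\frac{\pi_0(a'\vert s')}{\pi_1(a'\vert s')}\,e^{\beta Q^*(s',a')}.
\end{equation*}
The $\pi_1$ factors cancel, leaving exactly $V^*(s')$ by definition. This is the key algebraic step: the log-ratio correction built into $\widetilde{r}$ and $\widetilde{Q}^*$ is precisely what converts the expectation under the new prior $\pi_1$ back into the log-sum-exp under the old prior $\pi_0$. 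Combining this with the explicit form of $\widetilde{r}$, the Bellman right-hand side becomes
\begin{equation*}
\E_{s' \sim p}\!\left[r(s,a,s') + \gamma V^*(s')\right] + \frac{1}{\beta}\log\frac{\pi_0(a\vert s)}{\pi_1(a\vert s)} = Q^*(s,a) + \frac{1}{\beta}\log\frac{\pi_0(a\vert s)}{\pi_1(a\vert s)},
\end{equation*}
which equals the proposed $\widetilde{Q}^*(s,a)$, establishing the first identity by uniqueness of the fixed point.

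Next I would derive the state-value identity by plugging the proposed $\widetilde{Q}^*$ into the definition of $\widetilde{V}^*(s) = \tfrac{1}{\beta}\log\sum_a \pi_1(a\vert s)e^{\beta \widetilde{Q}^*(s,a)}$; again the $\pi_1(a\vert s)$ weight and the $\pi_1(a\vert s)$ in the denominator of the log-ratio cancel, yielding $V^*(s)$. The optimal policy identity follows analogously: in the Boltzmann form $\widetilde{\pi}^*(a\vert s) \propto \pi_1(a\vert s)e^{\beta \widetilde{Q}^*(s,a)}$, the same cancellation reduces numerator and denominator to $\pi_0(a\vert s)e^{\beta Q^*(s,a)}$ and its normalizer, i.e., to $\pi^*(a\vert s)$.

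There is no real obstacle here; the argument is a direct verification. The only subtle point is justifying that pointwise satisfaction of the soft Bellman optimality equation identifies $\widetilde{Q}^*$ uniquely, which is already guaranteed by boundedness of $\widetilde{r}$ (preserved since $\pi_0,\pi_1$ are assumed mutually absolutely continuous and bounded away from $0$) together with the contraction property of the soft Bellman operator noted after Equation \eqref{eq:bellman_backup}. I would state this uniqueness appeal explicitly before concluding.
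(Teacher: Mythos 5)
Your proof is correct and rests on the same algebraic core as the paper's: the log-ratio term in $\widetilde{r}$ and $\widetilde{Q}^*$ cancels the change of prior inside the log-sum-exp, converting the expectation under $\pi_1$ back into $V^*$ under $\pi_0$. The only difference is mechanical --- the paper establishes the identity by induction over Bellman backup iterations starting from $\widetilde{Q}^{(0)}=Q^{(0)}=0$ and takes $N\to\infty$, whereas you verify the closed form directly as the unique fixed point of the contraction; both are valid and essentially equivalent.
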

Therefore, if the prior policy is changed ($\pi_0 \to \pi_1$) in an entropy-regularized RL task, we can appropriately adjust the reward function (as written in \vocabEq{} \eqref{eq: change reward for prior policy adjustment}) in order to retain an optimal solution.
Therefore, by solving a task $\T$ with prior policy $\pi_0$, we have also simultaneously solved all tasks with an arbitrary prior policy $\pi_1 > 0$ and corresponding reward functions $\widetilde{r}$ given in \vocabEq{} \eqref{eq: change reward for prior policy adjustment}.

Now by applying Lemma \ref{lem:policy_change} to Theorem \ref{thm:rwd_change}, we immediately have the following reward shaping result for entropy-regularized RL.

\begin{corollary}[Reward Shaping]\label{thm:rwd_shaping1}
    \RewardShapingCorollary{}
\end{corollary}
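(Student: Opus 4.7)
The plan is to chain together the two results just established. First I would apply Theorem \ref{thm:rwd_change} to the pair $\T$ and $\widetilde{\T}$: this yields $\widetilde{Q}^* = Q^* + K^*$, with $K^*$ solving the soft Bellman optimality equation for the \emph{auxiliary} task that uses prior policy $\pi^*$ and reward $\kappa = \widetilde{r} - r$, and with optimal policy $\widetilde{\pi}^*$. Next I would apply Lemma \ref{lem:policy_change} in the ``inverse'' direction to replace the prior policy of this auxiliary task by $\pi_0$: by the Lemma, this augments the reward by $\tfrac{1}{\beta}\log\tfrac{\pi^*(a|s)}{\pi_0(a|s)}$, augments the value function by the same term, and preserves the optimal policy. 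Using the identity $\tfrac{1}{\beta}\log\tfrac{\pi^*(a|s)}{\pi_0(a|s)} = Q^*(s,a) - V^*(s)$, which follows directly from the Gibbs form of $\pi^*$ and the definition of $V^*$, the new value function becomes $K^* + Q^* - V^* = \widetilde{Q}^* - V^*$, which is precisely the second form claimed. Reinserting $Q^* - V^* = \tfrac{1}{\beta}\log\tfrac{\pi^*}{\pi_0}$ recovers the first form as well.

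The one nontrivial step is reconciling the reward obtained in the previous paragraph, namely $(\widetilde{r} - r)(s,a,s') + Q^*(s,a) - V^*(s)$, with the reward $\bar{r}(s,a,s') = \widetilde{r}(s,a,s') + \gamma V^*(s') - V^*(s)$ that actually defines $\bar{\T}$. These two functions of $(s,a,s')$ are not pointwise equal, but their difference $Q^*(s,a) - r(s,a,s') - \gamma V^*(s')$ has zero expectation under $s' \sim p(\cdot|s,a)$ by the soft Bellman optimality equation for $Q^*$. Since the soft Bellman operator only touches the reward through its $s'$-expectation, two reward functions that agree in that expectation have the same optimal action-value function, and uniqueness of the contraction's fixed point closes the argument. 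The optimal-policy claim $\bar{\pi}^* = \widetilde{\pi}^*$ then comes for free: Lemma \ref{lem:policy_change} preserves the policy in the prior-policy change, and the expectation-equivalence of rewards preserves it in this second reconciliation.

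The main obstacle is exactly this last reconciliation: one must observe that pointwise identity of rewards is strictly stronger than what the soft Bellman equation actually needs, and then exploit the telescoping $\E_{s'\sim p}[r(s,a,s') + \gamma V^*(s')] = Q^*(s,a)$ to bridge two algebraically different but value-equivalent reward expressions. The rest is bookkeeping --- tracking which prior policy and which reward belong to which auxiliary task, and one algebraic simplification via the Gibbs form of $\pi^*$.
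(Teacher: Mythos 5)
Your proof is correct and follows the same route as the paper, which simply states that the corollary is ``a direct application of Lemma \ref{lem:policy_change} to Theorem \ref{thm:rwd_change}.'' In fact you go further than the paper's one-line proof by explicitly handling the one genuine subtlety --- that the reward $\kappa(s,a,s') + \tfrac{1}{\beta}\log\tfrac{\pi^*(a|s)}{\pi_0(a|s)}$ produced by the two results only agrees with $\bar{r}$ in expectation over $s' \sim p(\cdot|s,a)$, which suffices because the soft Bellman operator depends on the reward only through that expectation --- so your argument is a complete and faithful expansion of the paper's.
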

Therefore, a shift in the reward function by $\gamma V^*(s') - V^*(s)$ does not change the optimal policy, for any reward function $r$ with corresponding optimal soft value functions $Q^*$ and $V^*$.
However, Corollary \ref{thm:rwd_shaping1} does not yet resemble a \textit{general} reward shaping theorem, since it \textit{requires} a solution to a different task ($\T$) to provide the shaping function, and we do not know whether such reward functions fully characterize \textit{all} reward functions with the same optimal policy $\widetilde{\pi}^*$. To address these issues, we combine Corollary \ref{thm:rwd_shaping1} with the following lemma appearing in \cite{cao2021identifiability}. In doing so, we arrive at a potential-based reward shaping theorem for entropy-regularized RL.

\begin{lemma}\cite{cao2021identifiability}\label{lem:inv_rwd}
    \CaoLemma{}
\end{lemma}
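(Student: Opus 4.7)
\textbf{Proof Plan for Lemma \ref{lem:inv_rwd}.} The strategy is to work backwards: assume the task with reward $R$ indeed has $V^*=v$ and $\pi^*=\pi$, solve for what $Q^*$ must be, plug this into the soft Bellman optimality equation to read off $R$, and then verify that this candidate reward actually produces the desired optimal quantities. Uniqueness will come from the observation that with the specific functional form given, the Bellman constraint holds pointwise in $s'$ rather than only in expectation.

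First I would use the softmax definition of the optimal policy together with \vocabEq{} \eqref{eq:definition of v} to invert the relationship between $Q^*$, $V^*$, and $\pi^*$: taking a logarithm on both sides of $\pi^*(a\vert s) = \pi_0(a\vert s)e^{\beta(Q^*(s,a) - V^*(s))}$ yields
\begin{equation*}
    Q^*(s,a) = V^*(s) + \frac{1}{\beta}\log \frac{\pi^*(a\vert s)}{\pi_0(a\vert s)}.
\end{equation*}
Imposing the desired $V^*=v$ and $\pi^*=\pi$ forces $Q^*(s,a) = v(s) + \frac{1}{\beta}\log \frac{\pi(a\vert s)}{\pi_0(a\vert s)}$. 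Substituting into the soft Bellman equation $Q^*(s,a) = \E_{s'\sim p}[R(s,a,s') + \gamma V^*(s')]$ and requiring the integrand to match pointwise immediately gives the stated formula for $R$. For verification, I would plug that $R$ back in: the terms $-\gamma v(s')$ and $+\gamma V^*(s')=+\gamma v(s')$ inside the expectation cancel, leaving $Q^*(s,a) = v(s) + \frac{1}{\beta}\log\frac{\pi(a\vert s)}{\pi_0(a\vert s)}$, and a short log-sum-exp computation (as in \vocabEq{} \eqref{eq:definition of v}) confirms that this $Q^*$ yields back $V^*=v$ and softmax policy $\pi^*=\pi$. By the contraction property of the soft Bellman operator, this $Q^*$ is the unique fixed point, so the candidate indeed solves the task.

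The main subtlety, and what I expect to be the only nontrivial obstacle, is the uniqueness claim. Observe that the Bellman equation only constrains $\E_{s'\sim p}[R(s,a,s')]$, so any modification of $R$ by a mean-zero (under $p(\cdot\vert s,a)$) function of $s'$ would leave $Q^*$, $V^*$, and $\pi^*$ unchanged. Uniqueness must therefore be interpreted with respect to the specific decomposition in \vocabEq{} \eqref{eq:cao thm 1}: once we insist that $R(s,a,s') + \gamma v(s')$ is independent of $s'$ (equivalently, that $R$ takes the advantage-plus-potential form with potential $v$), the Bellman equation pins down $R$ pointwise rather than only in expectation, and the formula is unique. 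I would state this carefully to avoid confusion with reward-shaping ambiguity, and then defer to the original argument of \cite{cao2021identifiability} for the full uniqueness statement within this restricted family.
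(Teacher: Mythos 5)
Your proposal is correct, and it takes a genuinely different route from the paper: the paper's entire proof of this lemma is a one-sentence deferral to Theorem 1 of \cite{cao2021identifiability}, remarking only that the $s'$-dependence of $R$ requires ``keeping track of $s'$ and the relevant expectation over next states.'' You instead give a self-contained argument: inverting the softmax relation to force $Q^*(s,a)=v(s)+\frac{1}{\beta}\log\frac{\pi(a\vert s)}{\pi_0(a\vert s)}$, reading off $R$ from the soft Bellman optimality equation, and verifying via the contraction property that this candidate is the unique fixed point, so the task with reward $R$ really does have $V^*=v$ and $\pi^*=\pi$. This buys transparency and, more importantly, surfaces a point the paper's citation-style proof glosses over: once rewards are allowed to depend on $s'$, the Bellman equation constrains only $\E_{s'\sim p}\left[R(s,a,s')+\gamma v(s')\right]$, so strict uniqueness of $R$ among all $s'$-dependent rewards fails --- any perturbation by a $p(\cdot\vert s,a)$-mean-zero function of $s'$ leaves $(Q^*,V^*,\pi^*)$ untouched. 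Your restriction of the uniqueness claim to the stated functional form (equivalently, to rewards for which $R(s,a,s')+\gamma v(s')$ is independent of $s'$) is the right reading and is a necessary clarification of the lemma as stated; the original result of \cite{cao2021identifiability} concerns rewards without next-state dependence, where this ambiguity does not arise, so the paper's claim that their proof ``must only be amended by keeping track of $s'$'' quietly elides exactly the subtlety you identified.
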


Informally, this result states that one can always construct a reward function $R$ such that an arbitrary $\pi$ and $v$ are optimal in the given environment (i.e. with fixed $p, \gamma, \beta$). Furthermore, for a fixed $\pi$ and $v$ in the given environment, the reward function $R$ is uniquely defined up to a constant shift.

By applying Lemma \ref{lem:inv_rwd} in light of Theorem \ref{thm:rwd_shaping1}, we see that the $V^*$ in \vocabEq{} \eqref{eq:rwd_shaping} can in fact be \textit{any} (bounded) function, $v \colon \s \to \mathbb{R}$: it will always represent an optimal value function for a \vocabRWDchange{} task. In fact, \citeauthor{cao2021identifiability} have proven that \textit{all} reward functions having the same optimal policy must be of the form shown in \vocabEq{} \eqref{eq:cao thm 1}. Therefore, the set of reward functions of this type describe \textit{all} rewards with the same optimal policy.

This naturally leads us to the generalization of the primary result of \cite{ng_shaping} in the setting of entropy-regularized RL:

\begin{theorem}[Potential-Based Reward Shaping]\label{thm:rwd_shaping_ng}
    \PBRSTheorem{}
\end{theorem}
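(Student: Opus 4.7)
The plan is to combine Corollary \ref{thm:rwd_shaping1} with Lemma \ref{lem:inv_rwd} so that an arbitrary bounded $\Phi$ can play the role of the $V^*$ appearing in that corollary. Given any bounded $\Phi \colon \s \to \mathbb{R}$, I would first apply Lemma \ref{lem:inv_rwd} with $v = \Phi$ and $\pi = \pi_0$. This produces an auxiliary task $\T_\Phi = \langle \s, \A, p, R_\Phi, \gamma, \beta, \pi_0 \rangle$ whose reward is $R_\Phi(s,a,s') = \Phi(s) - \gamma \Phi(s')$ (the log-ratio term vanishes because $\pi = \pi_0$), and whose optimal state-value function is exactly $V^*_\Phi = \Phi$. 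By construction, $\T_\Phi$ is \vocabRWDchange{} with the given task $\T$, since both share $\s, \A, p, \gamma, \beta, \pi_0$.

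Next, I would invoke Corollary \ref{thm:rwd_shaping1} taking $\T_\Phi$ in place of the corollary's $\T$ and the theorem's $\T$ in place of the corollary's $\widetilde{\T}$. The shaped reward of the corollary, $\bar{r}(s,a,s') = \widetilde{r}(s,a,s') + \gamma V^*_\Phi(s') - V^*_\Phi(s)$, then becomes $r(s,a,s') + \gamma \Phi(s') - \Phi(s)$, matching the theorem's $r'$ exactly. The corollary immediately delivers the policy identity $\widetilde{\pi}^* = \pi^*$ and the action-value identity $\widetilde{Q}^*(s,a) = Q^*(s,a) - \Phi(s)$. The state-value identity $\widetilde{V}^*(s) = V^*(s) - \Phi(s)$ then follows by substituting $\widetilde{Q}^* = Q^* - \Phi$ into the definition of the state-value function in \vocabEq{} \eqref{eq:definition of v}: since $\Phi(s)$ does not depend on $a$, the factor $e^{-\beta \Phi(s)}$ pulls out of the log-sum-exp, leaving $V^*(s) - \Phi(s)$.

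The main subtlety is the matching of prior policies between the auxiliary task and $\T$: for the corollary to apply, the two tasks must be \vocabRWDchange{}, i.e., share the same $\pi_0$. Choosing $\pi = \pi_0$ in Lemma \ref{lem:inv_rwd} sidesteps this cleanly and is the key step of the argument; it is also what makes the resulting $R_\Phi$ independent of $a$, so that the construction goes through for completely arbitrary (bounded) $\Phi$. A purely self-contained alternative would be to verify $\widetilde{Q}^* = Q^* - \Phi$ directly by plugging it into the soft Bellman optimality equation for $r'$ and invoking uniqueness of the fixed point of the soft Bellman contraction, but routing through the corollary aligns the proof with the paper's development and makes transparent why the potential $\Phi$ may be chosen freely.
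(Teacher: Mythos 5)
Your proposal is correct and follows essentially the same route as the paper: both construct an auxiliary task with reward $\Phi(s)-\gamma\Phi(s')$ via Lemma \ref{lem:inv_rwd} (with $v=\Phi$, $\pi=\pi_0$) so that $\Phi$ becomes the optimal value function of a task sharing the prior policy $\pi_0$, and then run the reward-change machinery to transfer the optimal policy and subtract $\Phi$ from the value functions. The only cosmetic difference is that you invoke the packaged Corollary \ref{thm:rwd_shaping1}, whereas the paper redoes the combination of Theorem \ref{thm:rwd_change} and Lemma \ref{lem:policy_change} inline.
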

Furthermore, due to the aforementioned result of \cite{cao2021identifiability}, we note that we also have the following necessary and sufficient conditions:
\begin{itemize}
    \item Sufficiency: Adding a \textit{potential-based} function
          $F(s,a,s')~=~\gamma~\Phi(s')~-~\Phi(s)$ to a task's reward function leaves the task's optimal policy unchanged.
    \item Necessity: Any shaping function which leaves a task's optimal policy invariant must be of the form $F$ above.
\end{itemize}

\begin{remark}
    Interestingly, the method described above for arriving at Theorem \ref{thm:rwd_shaping_ng} provides a useful way of considering the possible ``degrees of freedom" in the reward function by contrasting the work of \cite{ng_shaping} and \cite{cao2021identifiability}. Specifically, \cite{cao2021identifiability} considered an arbitrary function $v(s)$ which uniquely (up to a constant shift) identifies the reward function for a given optimal policy, as noted above. To make the connection with our results, we can take this arbitrary function to be the value function $V_K^*(s)$, since we have $\pi_K^* = \widetilde{\pi}^*$. Because of \vocabEq{} \eqref{eq:vtilda=v+vk for rwd change}, we can equivalently take this choice of $v(s)$ to fix $V^*(s)$, which can be any arbitrary function $\Phi(s)$, as noted in Theorem \ref{thm:rwd_shaping_ng} above.
    In other words, instead of considering the degree of freedom to be the value function $V_K^*$, we can alternatively consider it to be the value function $V^*$. The degree of freedom identified in \cite{cao2021identifiability} corresponds to $V_K^*$, whereas the degree of freedom identified by \cite{ng_shaping} corresponds to $V^*$. The two choices are equivalent, given \vocabEq{} \eqref{eq:vtilda=v+vk for rwd change}.
\end{remark}

The utility of Theorem \ref{thm:rwd_shaping_ng}, as compared to Corollary \ref{thm:rwd_shaping1}, is that we now have a way of constructing shaping functions without requiring knowledge of optimal policies in advance. Lemma \ref{lem:inv_rwd} alone would not have allowed for this, as it requires an optimal policy $\pi$ as input for the reward function $R$, which is more useful in the study of inverse reinforcement learning and identifiability. Of course, if the optimal policy $\pi^*$ \textit{is} known, then one can use Lemma \ref{lem:inv_rwd} to construct the entire range of shaped rewards by iterating over all possible functions $v(s)$. Considering the space of all reward functions ($\mathbb{R}^{|\s||\A|}$) Lemma \ref{lem:inv_rwd} carves out a ``degenerate" subspace or ``class" of dimension $|\s|$, whose members are defined by tasks with the same optimal policy.

Analogous to Remark 1 of \cite{ng_shaping}, we also have the following result in entropy-regularized reinforcement learning, which provides robustness to the shaping function $F$.
\begin{remark}\label{rmk:non-optimal shaping}
    \RobustRemark{}
\end{remark}
This robustness is a useful property, as it allows near-optimal policies in the reward-shaped task to be readily interpreted as near-optimal policies in the original task of interest. More generally speaking, we can say that the action of reward shaping preserves ordering in the space of policies.

\begin{figure}
    \begin{center}
        \includegraphics[width=0.43\textwidth]{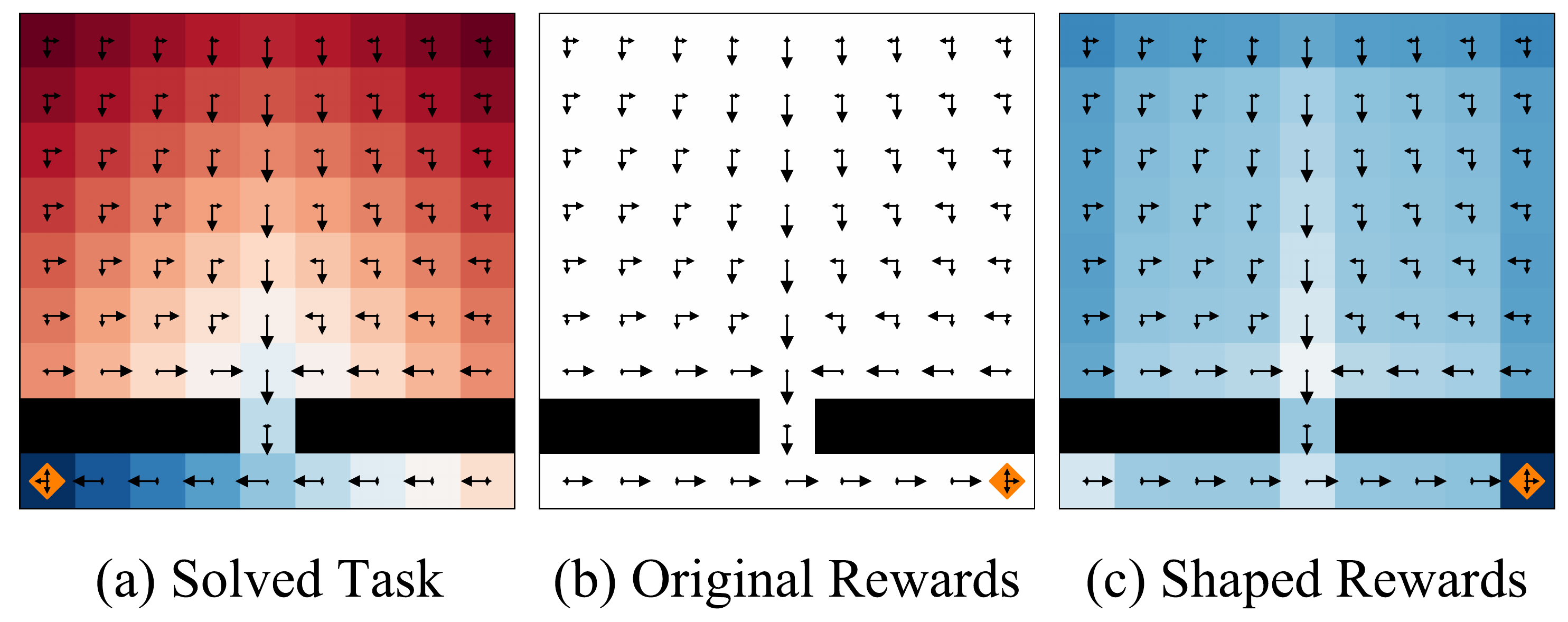}
        \caption{Demonstrating the reward shaping result of Corollary \ref{thm:rwd_shaping1}. Tasks in question are to navigate to the left or right corner in the bottom of this simple maze. The orange diamond represents the goal state for either task (blue represents regions of higher value). (a) Task with known solution ($\pi^*$ and $V^*(s)$ shown); (b) New task in the same environment ($\pi^*$ and $r$ shown); (c) Task defined in (b) with a shaped reward function (\vocabEq{}  \eqref{eq:rwd_shaping}), having the same optimal policy as the task with an unshaped reward function ($\pi^*$ and $r$ shown). In this experiment we use the parameters $\beta=3, \gamma =0.99$.} 
        \label{fig:rwd_shape_maze}
    \end{center}
\end{figure}

\begin{figure}
    \centering
    \includegraphics[width=0.4\textwidth]{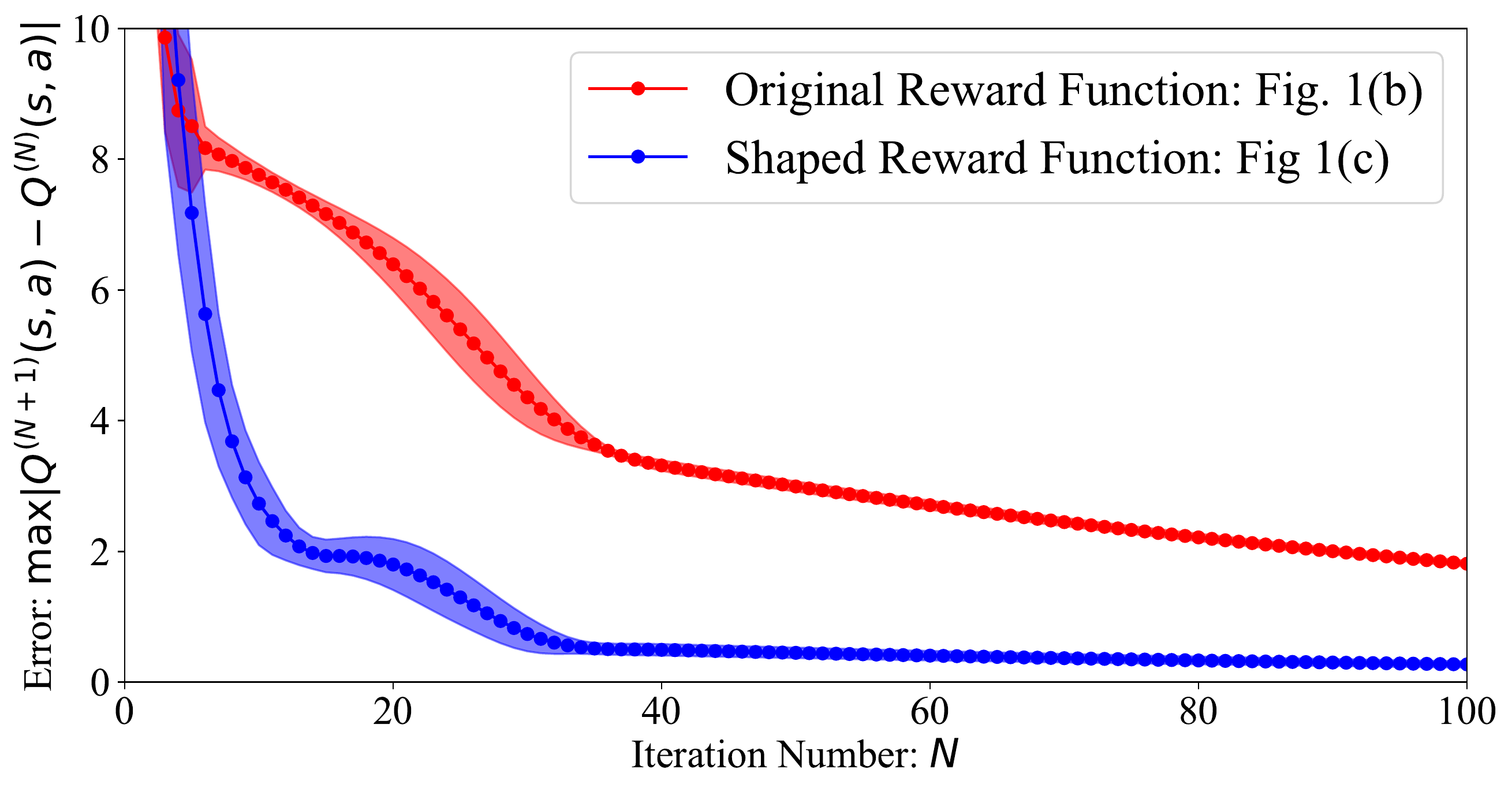}
    \caption{Convergence of the Bellman backup equation (\vocabEq{} \eqref{eq:bellman_backup}) for the unshaped and shaped task considered in Fig. \ref{fig:rwd_shape_maze}(b), \ref{fig:rwd_shape_maze}(c) respectively. Average taken over 10 random initializations, and one standard deviation is shown in the shaded region.}
    \label{fig:rwd_shape_converg}
\end{figure}

\subsection{Identifiability}
The problem of identifiability arises in the context of inverse reinforcement learning (IRL) where one observes an optimal policy and attempts to infer the underlying reward function. In \cite{cao2021identifiability}, it was argued that, under certain conditions, the underlying reward function $r$ is identifiable (up to a constant shift) when its corresponding optimal policy is observed in two sufficiently different environments with dynamics $p$ and $q$. As in \cite{cao2021identifiability}, we suppose the environments are further diversified by different discount factors $\gamma$ and $\widetilde{\gamma}$ respectively. In the following, we will see how our results provide insight into the conditions considered in \cite{cao2021identifiability} for identifiability using data from these two different environments.

Theorem \ref{thm:rwd_shaping_ng} can be used to derive the condition which makes it impossible to determine an underlying reward function $r$ given the optimal policies in $p$ and $q$. If it is possible to shape $r$ with a potential $\Phi(s)$ in $p$ and shape $r$ by another potential $\Psi(s)$ in $q$ in a way that makes the shaped reward functions identical, then identifiability is not possible. 
Using \vocabEq{} \eqref{eq:eq for rewards in thm rwd shaping} to equate two such shaped rewards, we arrive at the following condition:
\begin{equation}
    \gamma \E_{s' \sim{} p} \Phi(s) -  \Phi(s') = \widetilde{\gamma} \E_{s' \sim{} q} \Psi(s) -  \Psi(s')
\end{equation}
If the above condition is satisfied by non-trivial shaping potentials $\Phi$ and $\Psi$, then there are at least two reward functions (the unshaped and correspondingly shaped rewards) in dynamics $p$ and $q$ which are consistent with all the observed constraints, hence the reward function will not be identifiable. The condition that there are no such non-trivial shaping potentials is imposed by Definition 1 of \cite{cao2021identifiability} which then leads to their Theorem 2.

\section{Change of Dynamics}
Beyond having a new task whose sole distinction is in the reward function (\vocabRWDchange{} tasks), we can instead consider two tasks which differ in their transition dynamics:
\begin{definition*}[\vocabDYNchangeCAP{} Tasks]
    Consider a set of tasks $\{\T^{(k)}\}_{k=1}^{N}$. If the tasks only vary on their transition dynamics; that is, they are of the form $\T^{(k)} = \langle \s,\A,p^{(k)},r,\gamma, \beta, \pi_0 \rangle$ then we say the set of tasks $\{\T^{(k)}\}$ is \textbf{\vocabDYNchange{}}.
\end{definition*}

Consider a shift in the environment's dynamics $p(s'\vert s,a) \to q(s'\vert s,a)$ as represented by two \vocabDYNchange{} tasks. For example, in a discrete maze setting, the floor may become more slippery.
We again derive a corrective value function that utilizes the previous solution to a \vocabDYNchange{} task.
We now state the corresponding result for the corrective value function in this case:
\begin{theorem}\label{thm:dynamics_change}
    \DynamicsChangeTheorem{}
\end{theorem}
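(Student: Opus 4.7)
The plan is to verify the ansatz $\widetilde{Q}^*(s,a) = Q^*(s,a) + K^*(s,a)$ directly by substitution into the soft Bellman optimality equation for $\widetilde{\T}$, exploiting uniqueness of its fixed point under the contraction of the soft Bellman operator. First I would write down the Bellman equation for $\widetilde{Q}^*$ using the new dynamics $q$, and, since $r$ is assumed to depend only on $(s,a)$, pull $r(s,a)$ out of the expectation. Separately, I would record the analogous equation for the original task, which (again using $r=r(s,a)$) gives $Q^*(s,a) = r(s,a) + \gamma \E_{s'\sim p} V^*(s')$, so that $r(s,a) = Q^*(s,a) - \gamma \E_{s'\sim p} V^*(s')$.

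Next I would substitute $Q^* + K^*$ into the $\widetilde{Q}^*$ Bellman equation. The key manipulation is inside the log-sum-exp: using $\pi_0(a'\vert s') e^{\beta Q^*(s',a')} = \pi^*(a'\vert s')\, e^{\beta V^*(s')}$, which is just a rearrangement of the definition of $\pi^*$ together with \eqref{eq:definition of v}, one obtains
\begin{equation*}
\tfrac{1}{\beta}\log \E_{a'\sim \pi_0} e^{\beta(Q^*(s',a') + K^*(s',a'))} = V^*(s') + \tfrac{1}{\beta}\log \E_{a'\sim \pi^*} e^{\beta K^*(s',a')}.
\end{equation*}
Taking the expectation over $s'\sim q$ and multiplying by $\gamma$ produces a $\gamma \E_{s'\sim q} V^*(s')$ term and the log-sum-exp appearing in \eqref{eq:K_p_backup2}. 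Combining with $r(s,a) = Q^*(s,a) - \gamma \E_{s'\sim p} V^*(s')$, the expression collapses precisely to $Q^*(s,a) + \kappa(s,a) + \tfrac{\gamma}{\beta}\E_{s'\sim q}\log \E_{a'\sim \pi^*} e^{\beta K^*(s',a')}$, which by the defining equation for $K^*$ equals $Q^*(s,a) + K^*(s,a)$. Thus $Q^*+K^*$ satisfies the soft Bellman equation of $\widetilde{\T}$, and by uniqueness of the fixed point of the contraction $\widetilde{\mathcal{B}}$ we conclude \eqref{eq:dynamics Q=Q+K}.

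Finally, I would derive the state-value and policy identities from \eqref{eq:dynamics Q=Q+K}. Plugging $\widetilde{Q}^* = Q^* + K^*$ into the definition $\widetilde{V}^*(s) = \tfrac{1}{\beta}\log \E_{a\sim \pi_0} e^{\beta \widetilde{Q}^*(s,a)}$ and applying the same reweighting trick $\pi_0(a\vert s) e^{\beta Q^*(s,a)} = \pi^*(a\vert s) e^{\beta V^*(s)}$ yields $\widetilde{V}^*(s) = V^*(s) + V_K^*(s)$. The policy identity then follows by substituting these expressions into $\widetilde{\pi}^*(a\vert s) = \pi_0(a\vert s) e^{\beta \widetilde{Q}^*(s,a)}/e^{\beta \widetilde{V}^*(s)}$: the $\pi_0$ and $Q^*$ factors combine into $\pi^* e^{\beta V^*}$, the $e^{\beta V^*}$ cancels, and what remains is exactly $\pi^*_K$.

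I expect the only real obstacle to be keeping the log-sum-exp rewriting tidy: the entire proof hinges on the single identity $\pi_0(a\vert s) e^{\beta Q^*(s,a)} = \pi^*(a\vert s) e^{\beta V^*(s)}$, which must be applied consistently so that all $V^*$-terms land in the right place to match $\kappa(s,a)$. Once that bookkeeping is handled, both the $Q$-identity and its $V$- and $\pi$-corollaries follow with no further substantive work, mirroring the structure of the proof of Theorem \ref{thm:rwd_change}.
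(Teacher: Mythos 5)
Your proposal is correct and uses the same essential ingredients as the paper's proof: the reweighting identity $\pi_0(a\vert s)e^{\beta Q^*(s,a)} = \pi^*(a\vert s)e^{\beta V^*(s)}$ inside the log-sum-exp, the splitting of $\gamma\E_{s'\sim q}V^*(s')$ against $\gamma\E_{s'\sim p}V^*(s')$ to recover $Q^*$ and produce $\kappa$, and the same derivations of the $V$- and $\pi$-identities. The only difference is packaging: the paper inducts on the Bellman backup iterates ($\widetilde{Q}^{(N)} = Q^* + K^{(N)}$) and passes to the limit, whereas you substitute the ansatz directly into the optimality equation and invoke uniqueness of the contraction's fixed point --- two equivalent ways of deploying the same contraction argument.
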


Therefore, in the face of changed dynamics, an agent can adapt by learning the value function $K^*$ instead of $\widetilde{Q}^*$. In this way, the agent uses the relevant knowledge already accumulated in a similar environment.

For simplicity, we have kept the rewards the same across the two tasks, but the results of Theorem \ref{thm:rwd_change} and Theorem \ref{thm:dynamics_change} can be readily combined to accommodate those tasks which have different dynamics \textit{and} different reward functions.

Examining \vocabEq{} \eqref{eq:dynamics Q=Q+K}, where $\widetilde{Q}^*$ and $K^*$ correspond to a task with dynamics $q$, and $Q^*$ corresponds to dynamics $p$, we can instead consider $\widetilde{Q}^*$ and $K^*$ as being related via \vocabEq{} \eqref{eq:Q=Q+K rwd_change} of Theorem \ref{thm:rwd_change}. With this perspective, $Q^*$ represents the optimal value-function of a \vocabRWDchange{} task with dynamics $q$ and a different reward function (denoted $\bar{r}$ below).
Therefore, given a solution to a task with dynamics $p$, we automatically have the solution to a task in dynamics $q$. The reward function $\bar{r}$ in a task with dynamics $q$ to which this optimal value function corresponds, is provided by the following theorem:

\begin{theorem}\label{thm:dynamics free soln}
    \FreeSolutionsTheorem{}
\end{theorem}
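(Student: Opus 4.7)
The plan is to prove the theorem by direct substitution: guess that $\bar{Q}^* = Q^*$ is the optimal action-value function for $\bar{\T}$, verify that it is a fixed point of the soft Bellman optimality operator associated with $\bar{\T}$, and then invoke the uniqueness of that fixed point (since the soft Bellman operator is a contraction, as noted in the preliminaries). This is the cleanest route because the correction term in $\bar{r}$ has been engineered precisely to cancel the change in the expected next-state value when $p$ is replaced by $q$.

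First, I would rewrite the soft Bellman equation for $Q^*$ in task $\T$. Using the assumption inherited from Theorem \ref{thm:dynamics_change} that $r = r(s,a)$, together with the definition of $V^*$ in \eqref{eq:definition of v}, the original fixed-point equation becomes
\begin{equation*}
Q^*(s,a) = r(s,a) + \gamma \E_{s' \sim p} V^*(s').
\end{equation*}
Next, let $\bar{\mathcal{B}}$ denote the soft Bellman operator for task $\bar{\T} = \langle \s,\A,q,\bar{r},\gamma,\beta,\pi_0\rangle$. Applying $\bar{\mathcal{B}}$ to the candidate $Q^*$, and observing that the log-sum-exp of $Q^*$ under $\pi_0$ is exactly $V^*(s')$ by definition, yields
\begin{equation*}
(\bar{\mathcal{B}} Q^*)(s,a) = \bar{r}(s,a) + \gamma \E_{s' \sim q} V^*(s').
\end{equation*}

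Substituting the definition of $\bar{r}$ from \eqref{eq:rwd for free soln dynamics}, the two $\gamma \E_{s' \sim q} V^*(s')$ terms cancel, leaving $(\bar{\mathcal{B}} Q^*)(s,a) = r(s,a) + \gamma \E_{s' \sim p} V^*(s') = Q^*(s,a)$. Hence $Q^*$ is a fixed point of $\bar{\mathcal{B}}$. Since $\bar{\mathcal{B}}$ is a contraction on the space of bounded action-value functions, its fixed point is unique and equals $\bar{Q}^*$, so $\bar{Q}^* = Q^*$. The equalities $\bar{V}^* = V^*$ and $\bar{\pi}^* = \pi^*$ then follow immediately from \eqref{eq:definition of v} and the softmax expression for the optimal policy, both of which depend only on $\bar{Q}^*$ and $\pi_0$ (which is unchanged).

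There is no real obstacle here: the one bookkeeping point to flag explicitly is the reliance on $r = r(s,a)$ (carried over from Theorem \ref{thm:dynamics_change}), which is what allows $\E_{s' \sim p} r(s,a,s')$ to be written as $r(s,a)$ and makes the cancellation transparent. Alternatively, one could present this result as a corollary of Theorem \ref{thm:dynamics_change} by noting that the construction of $\bar{r}$ is exactly the negative of the reward $\kappa$ appearing in \eqref{eq:K_p_backup2#1}, so the associated $K^*$ vanishes identically; I would, however, prefer the direct verification above since it is shorter and self-contained.
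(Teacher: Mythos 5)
Your proof is correct, but it takes a genuinely different route from the paper's. You verify directly that $Q^*$ is a fixed point of the soft Bellman optimality operator for $\bar{\T}$: since $r=r(s,a)$, the original fixed-point equation reads $Q^*(s,a)=r(s,a)+\gamma\E_{s'\sim p}V^*(s')$, and applying the operator for $\bar{\T}$ to the candidate $Q^*$ produces $\bar{r}(s,a)+\gamma\E_{s'\sim q}V^*(s')$, whose $q$-expectation terms cancel against those in $\bar{r}$ to recover $Q^*(s,a)$; uniqueness of the fixed point (the operator is a contraction and $\bar r$ is bounded) then gives $\bar{Q}^*=Q^*$, with $\bar V^*=V^*$ and $\bar\pi^*=\pi^*$ following from \eqref{eq:definition of v} and the softmax policy formula since $\pi_0$ is unchanged. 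The paper instead introduces the intermediate \vocabDYNchange{} task $\widetilde{\T}=\langle\s,\A,q,r,\gamma,\beta,\pi_0\rangle$ and writes the \emph{same} pair $(\widetilde{Q}^*,K^*)$ in two ways: once via Theorem \ref{thm:dynamics_change} as $\widetilde{Q}^*=Q^*+K^*$, and once via Theorem \ref{thm:rwd_change} (applied within dynamics $q$) as $\widetilde{Q}^*=\bar{Q}^*+K^*$ with $\bar r=\widetilde r-\kappa$; subtracting gives $\bar{Q}^*=Q^*$. Your direct verification is shorter, self-contained, and does not rely on the correctness of the two earlier theorems; what the paper's argument buys is an explanation of \emph{where} $\bar r$ comes from (it is forced by the change of perspective between the two decompositions, which is the conceptual thread the surrounding discussion builds on), whereas your route only confirms that the given formula works. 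Your closing remark correctly identifies the paper's strategy as the alternative, and your explicit flagging of the reliance on $r=r(s,a)$ is a point the paper leaves implicit in this proof.
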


Interestingly, Theorem \ref{thm:dynamics free soln} implies that by solving problems in one environment with dynamics $p$, we are also simultaneously solving different problems in (arbitrary) other dynamics $q$. Hence, by learning in an environment that is ``safer" to experiment in ($p$), we can obtain solutions to tasks in another environment ($q$), perhaps where testing is more difficult, expensive, or dangerous. By assembling this set of rewards ($\bar{r}$) for which we have the solution in dynamics $q$, we can either (a) attempt to solve the inverse problem of finding a task to solve in $q$ such that that we solve the desired task in $p$ or (b) use the forthcoming results on compositionality and previous results on reward shifts to solve the task(s) of interest.

\begin{figure}
    \begin{center}
        \includegraphics[width=0.45\textwidth]{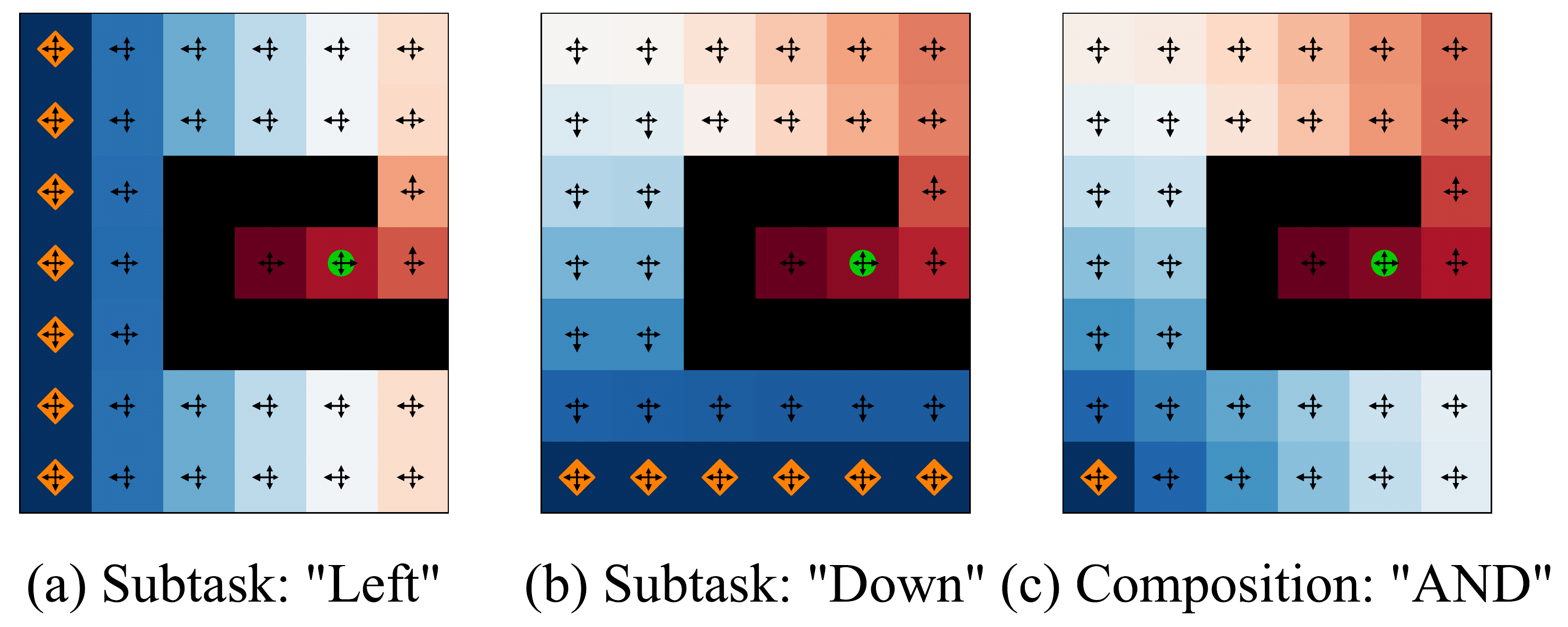}
        \caption{Composition of two subtasks (a) and (b), with function $f=\min(\cdot)$. All subfigures illustrate $\pi^*$ and $V^*(s)$. (a)-(b) Tasks with known solution (c) Composition of subtasks. In this experiment we use the parameters $\beta=2, \gamma =0.98$. The green circle indicates the agent's initial state and the orange diamond represents the goal state for either task.}
        \label{fig:comp_maze}
    \end{center}
\end{figure}

\begin{figure}[t]
    \centering
    \includegraphics[width=0.4\textwidth]{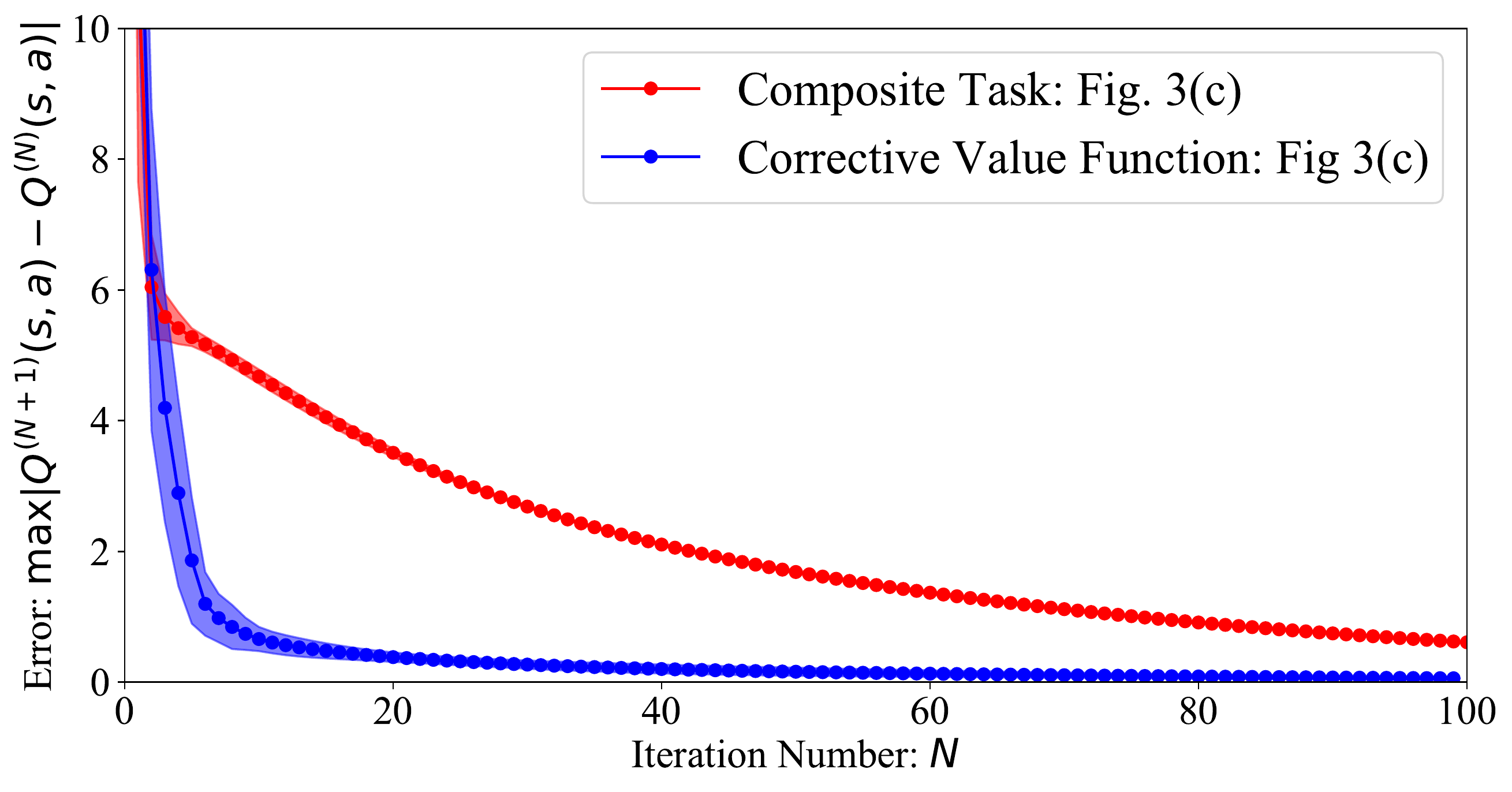}
    \caption{Convergence of the Bellman backup equation (\vocabEq{} \eqref{eq:bellman_backup}) for the task considered in Fig. \ref{fig:comp_maze}(c). Red is the composition ($\widetilde{Q}^*$) learned directly. Blue is the corrective value function ($K^*$). Average taken over 25 random initializations, and one standard deviation is shown in the shaded region.}
    \label{fig:comp_converg}
\end{figure}

\section{Composition of Rewards}
In this section we generalize the previous results by considering arbitrary compositions of $M$ \vocabRWDchange{} tasks. That is, we now consider a set of solved tasks $\{\T^{(m)}\}_{m=1}^{M}$ such that $\T^{(m)} = \langle \s,\A,p,r^{(m)},\gamma, \beta, \pi_0 \rangle$. To compose these tasks, we consider applying a function $f$ to the reward functions of $\{\T\}$. We also note the specific case of $M=1$ for transformations of a single task's reward function may be of interest.

\begin{definition*}[Task Composition]
    Consider a set of \vocabRWDchange{} tasks, $\{\T^{(m)}\}_{m=1}^{M}$. The \textbf{composition} of $\{\T^{(m)}\}_{m=1}^{M}$ under the (bounded) function $f \colon \mathbb{R}^M \to \mathbb{R}$ is defined as the mutually \vocabRWDchange{} task with reward function $r(s,a,s')=f(\{r^{(m)}(s,a,s')\})$.
\end{definition*}

Motivated by the results of \cite{Hunt}, we derive another corrective value function, which corrects the na\"ive guess of functionally transforming the value functions in the same way as the rewards (that is, $f(\{Q^{(m)}\})$). This na\"ive guess is in fact a bound in the case of convex combinations \cite{Haarnoja2018}. Learning the corrective value function for a simple composition task is illustrated in Figures \ref{fig:comp_maze} and \ref{fig:comp_converg}.

\begin{theorem}\label{thm:composition}
    \CompositionTheorem{}
\end{theorem}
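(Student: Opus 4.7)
The plan is to mirror the proof of Theorem~\ref{thm:rwd_change} almost verbatim, with $f(\{Q^{(m)}\})$ playing the role that $Q^*$ plays there and the induced policy $\pi_f$ playing the role of $\pi^*$. Concretely, I would adopt the ansatz $\widetilde{Q}^*(s,a) = f(\{Q^{(m)}(s,a)\}) + K^*(s,a)$, verify that it solves the soft Bellman optimality equation for the composed task $\langle \s,\A,p,f(\{r^{(m)}\}),\gamma,\beta,\pi_0\rangle$, and then invoke uniqueness of the fixed point of the soft Bellman operator (it is a contraction by the standard argument cited earlier) to conclude equality.

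The key algebraic step is a change-of-measure identity that is a direct rearrangement of the definition of $\pi_f$:
\begin{equation*}
\pi_0(a'|s')\,e^{\beta f(\{Q^{(m)}(s',a')\})} \;=\; \pi_f(a'|s')\,e^{\beta V_f(s')}.
\end{equation*}
Substituting the ansatz into the soft Bellman equation for $\widetilde{Q}^*$ and applying this identity inside the log-sum-exp converts the $\pi_0$-expectation into
\begin{equation*}
\E_{a'\sim \pi_0} e^{\beta(f(\{Q^{(m)}\}) + K^*)} \;=\; e^{\beta V_f(s')}\,\E_{a'\sim \pi_f} e^{\beta K^*(s',a')},
\end{equation*}
so that after taking logs and multiplying by $\gamma/\beta$ a term $\gamma V_f(s')$ appears alongside $f(\{r^{(m)}\})$. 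Using the definition $\kappa(s,a,s') = f(\{r^{(m)}\}) + \gamma V_f(s') - f(\{Q^{(m)}(s,a)\})$, those two summands combine to $\kappa + f(\{Q^{(m)}(s,a)\})$; pulling $f(\{Q^{(m)}(s,a)\})$ out of the expectation (it depends only on $s,a$) and recognizing the remaining right-hand side as the Bellman backup for $K^*$ in \vocabEq{} \eqref{eq:K in composition} reproduces $f(\{Q^{(m)}(s,a)\}) + K^*(s,a)$, matching the ansatz.

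The state-value and policy statements then follow mechanically. For $\widetilde{V}^*$, plug the ansatz into \vocabEq{} \eqref{eq:definition of v} and apply the same $\pi_0 \to \pi_f$ rewriting to get $\widetilde{V}^*(s) = V_f(s) + V_K^*(s)$. For $\widetilde{\pi}^*$, divide $\pi_0(a|s)\,e^{\beta \widetilde{Q}^*(s,a)}$ by $e^{\beta \widetilde{V}^*(s)}$; the factors $e^{\beta V_f(s)}$ cancel and what remains is $\pi_f(a|s)\,e^{\beta K^*(s,a)}/e^{\beta V_K^*(s)} = \pi_K^*(a|s)$.

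The only non-cosmetic step — and hence the main obstacle — is recognizing the right definition of $\kappa$ and $\pi_f$ so that the change-of-measure identity closes the argument. The extra summand $\gamma V_f(s')$ in $\kappa$ is doing real work: it compensates for the fact that $f(\{Q^{(m)}\})$, unlike $Q^*$ in Theorem~\ref{thm:rwd_change}, is not itself a soft optimal value function for any task with prior $\pi_0$, so its $\pi_0$-log-sum-exp does not collapse to itself. Once this compensation is built into $\kappa$ and $\pi_f$ is used as the prior for the $K^*$ subtask, the remainder of the proof is bookkeeping identical to the reward-change case, and Theorem~\ref{thm:rwd_change} is recovered as the special case $M=1$, $f=\mathrm{id}$ (up to the additive $\kappa$ from a reward shift).
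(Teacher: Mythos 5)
Your proposal is correct and follows essentially the same route as the paper's proof: the central step in both is the change of measure $\pi_0(a'|s')\,e^{\beta f(\{Q^{(m)}(s',a')\})} = \pi_f(a'|s')\,e^{\beta V_f(s')}$, which pulls $\gamma V_f(s')$ into the reward so that $\kappa$ and the backup for $K$ in \vocabEq{} \eqref{eq:K in composition} emerge, and the value/policy identities then follow by the same bookkeeping. The only cosmetic difference is that you verify the ansatz against the fixed-point equation and invoke uniqueness of the contraction's fixed point, whereas the paper runs the identical algebra as an induction over Bellman backup iterations and passes to the limit $N\to\infty$.
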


The result of this theorem can be stated as follows: calculate a policy by transforming the optimal value functions in the same way the rewards were transformed. Using this new policy as the prior policy with an appropriate reward function ($\kappa$ defined in \vocabEq{} \eqref{eq:rwd for composition}), we can learn the correction term, $K^*$, to obtain the desired optimal value function, $\widetilde{Q}^*$. We again note that, as in Theorem \ref{thm:rwd_change}, it is possible to learn $K^*$ in an offline manner which is described in the proof of Theorem \ref{thm:composition} in the \vocabAppendix{}.

The fixed point $K^*$ in \vocabEq{} \eqref{eq:K in composition} generalizes the ``Divergence Correction'' ($C^\infty$) introduced by \cite{Hunt} for convex combinations of reward functions. Notice that the reward function $\kappa$ in Theorem \ref{thm:composition} measures the ``non-linearity'' of $f$. For if $f$ were linear (cf. Theorem 3.2 of \cite{Hunt}), then the first term (in brackets) cancels with the total transformed $Q$ function being subtracted, leaving the R\'enyi divergence between subtask policies.
In addition, we have also shown that $K^*$ is in fact the \textit{optimal} value function for a certain task: the task with rewards and prior policy as defined in Theorem \ref{thm:composition}.

\section{Discussion}
In this work, we have studied transfer learning in entropy-regularized reinforcement learning. Specifically, we have considered \vocabRWDchange{} tasks, \vocabDYNchange{} tasks, and composition of \vocabRWDchange{} tasks. By deriving a corrective value function in each case, we have shown that the solutions for new tasks can be informed by previous solutions. Interestingly, this study of corrective value functions also led to the derivation of potential-based reward shaping in entropy-regularized RL.

We have shown that optimal solutions under a given transition dynamics also corresponds to a set of optimal solutions under any other dynamics, by explicitly calculating the reward function in \vocabEq{} \eqref{eq:rwd for free soln dynamics}. This change in perspective between Theorem \ref{thm:rwd_change} and Theorem \ref{thm:dynamics_change} allows one to transfer a body of knowledge obtained in one dynamics $p$, to any other dynamics of interest. Although these are solutions to reward functions which may not be of interest \textit{a priori}, the solutions may still prove useful when used in tandem with the results of Theorem \ref{thm:rwd_change} and Theorem \ref{thm:composition}.

We have also generalized the ``Divergence Correction" result of \cite{Hunt}, allowing for general transformations and compositions over primitive tasks. All derived corrective value functions allow the agent to solve the task of interest by applying previous knowledge to the problem at hand.

\section{Limitations and Future Work}
Although the results and proofs are stated for discrete settings, it is straightforward to extend the results to continuous state and action spaces, with the usual assumptions for Bellman convergence. The tests here are demonstrated in discrete finite environments, but this work may also be extended to encompass continuous spaces.

This work is situated in the context of entropy-regularized RL, where the stochasticity of optimal policies allow for optimal solutions to be manipulated and combined for new tasks. Further work may explore the analogous problem in \vocabunreg{} RL, which can we understood as the limit $\beta \to \infty$.
We also note that the case of $\gamma = 1$
has straightforward proofs in the probabilistic inference framework \cite{LevineTutorial}. We intend to explore these results and their consequences in future work.

Just as we have derived corrective value functions for changes in reward function, prior policy (Lemma \ref{lem:policy_change}), and dynamics change, one might also consider tasks which differ in discount factor $\gamma$ or temperature $\beta^{-1}$. Although not stated here, similar results can be derived for these settings as well. These generalization and their consequences will be explored in future work.

Finally, we note that there may be more general results for a definition of composition which allow the dynamics of tasks to differ as well. This topic, and possible implications for sim-to-real and general transfer learning is currently being explored and will be left to future work.

\iftrue
    \appendix
    \section{Appendix Overview}
    In this Technical Appendix, we: (a) provide further experiments to expand on those in the Main Text; and (b) provide the proofs of all theoretical results presented in the Main Text.

    \section{Experiments}
    In this section, we explore the results of further experiments in the tabular domain, as introduced in the Main Text. These experiments serve to answer the following questions:
    \begin{itemize}
        \item Does reward shaping (using the derived results) improve training times?
        \item How is learning affected by the structure of the underlying tasks when using reward shaping?
    \end{itemize}
    We also provide a complementary (``OR" rather than ``AND") composition task, to be contrasted with Figures 3 and 4 of the Main Text.

    To address these items, we modify OpenAI's Frozen Lake environment (\cite{openAI}) to allow for model-based learning (wherein both dynamics and rewards are known).

    To calculate the optimal soft action-value functions, we use exact dynamic programming, by iterating the Bellman backup equation (\vocabEq{} \eqref{eq:bellman2}) in a model-based setting until convergence.
    \small
    \begin{equation}\label{eq:bellman2}
        Q^{(N+1)}(s,a) = \E_{s' \sim{} p} \left[ r(s,a,s') + \frac{\gamma}{\beta} \log \E_{a' \sim{} \pi_0} e^{ \beta Q^{(N)}(s',a')} \right] 
    \end{equation}
    \normalsize
    Furthermore, we calculate the errors during training; defined as the maximum difference between two successive iterations of Bellman backup:
    \begin{equation}
        \text{error} = \max_{(s,a)} |Q^{(k+1)}(s,a) - Q^{(k)}(s,a)|.
    \end{equation}

    \subsection{``OR'' Composition}
    In the Main Text (Figures 3 and 4) we demonstrated the results for an ``AND" composition, as defined in \cite{boolean}. The composition is taken over two subtasks in a spiral-shaped maze. Each subtask corresponds to reaching either the left or bottom wall of the maze. In this experiment we use an identical setup (same subtasks) but instead apply the ``OR'' composition (i.e. the composition function is now $f = \max( \cdot )$). Similar to the Main Text, we take $\beta=2$, $\gamma=0.98$.

    \begin{figure}[ht]
        \centering
        \includegraphics[width=0.43\textwidth]{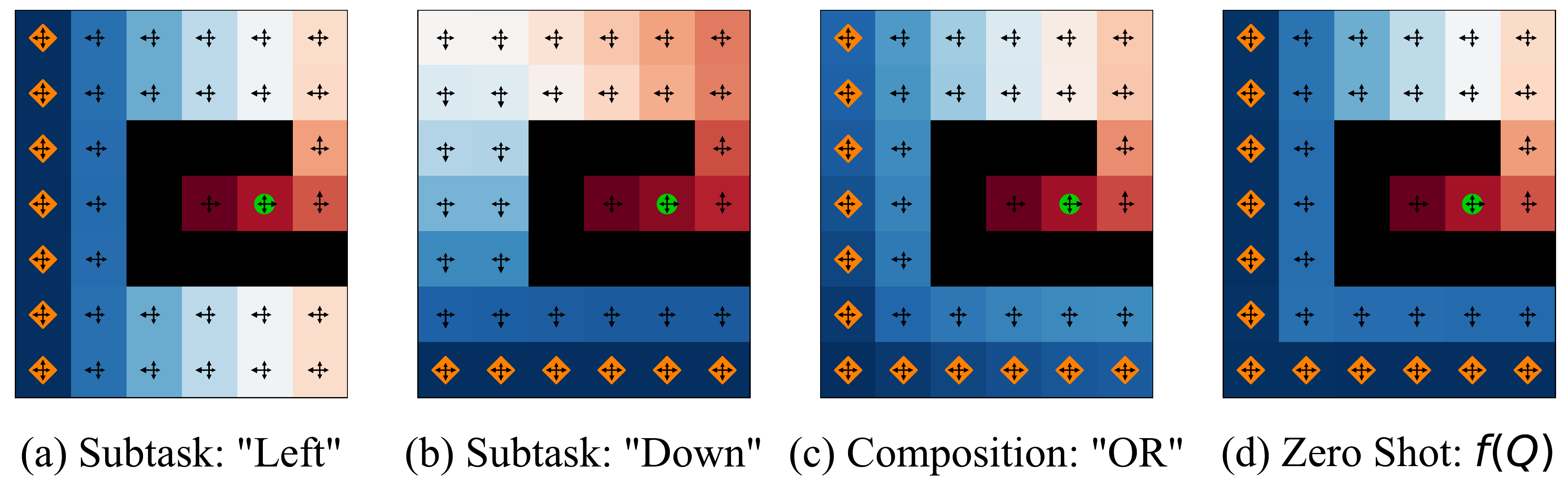}

        \caption{State value functions are illustrated in red-blue colormaps (with blue indicating regions of higher value). The orange diamonds represent rewarding states, definitive of each task. Corresponding policies are plotted with black arrows. (a)-(b): Subtasks, presumably with known solutions. (c): The solution to the composite problem. (d): The ``Zero Shot" approximation is the zeroth step of Theorem~\ref{thm:composition}, meaning $\widetilde{Q}^{(0)}(s,a)=\max\left(Q^{(\text{Left})}(s,a\right), Q^{(\text{Down})}(s,a))$.}
        \label{fig:ORcomp_maze}
    \end{figure}

    \begin{figure}[H]
        \centering
        \includegraphics[width=0.43\textwidth]{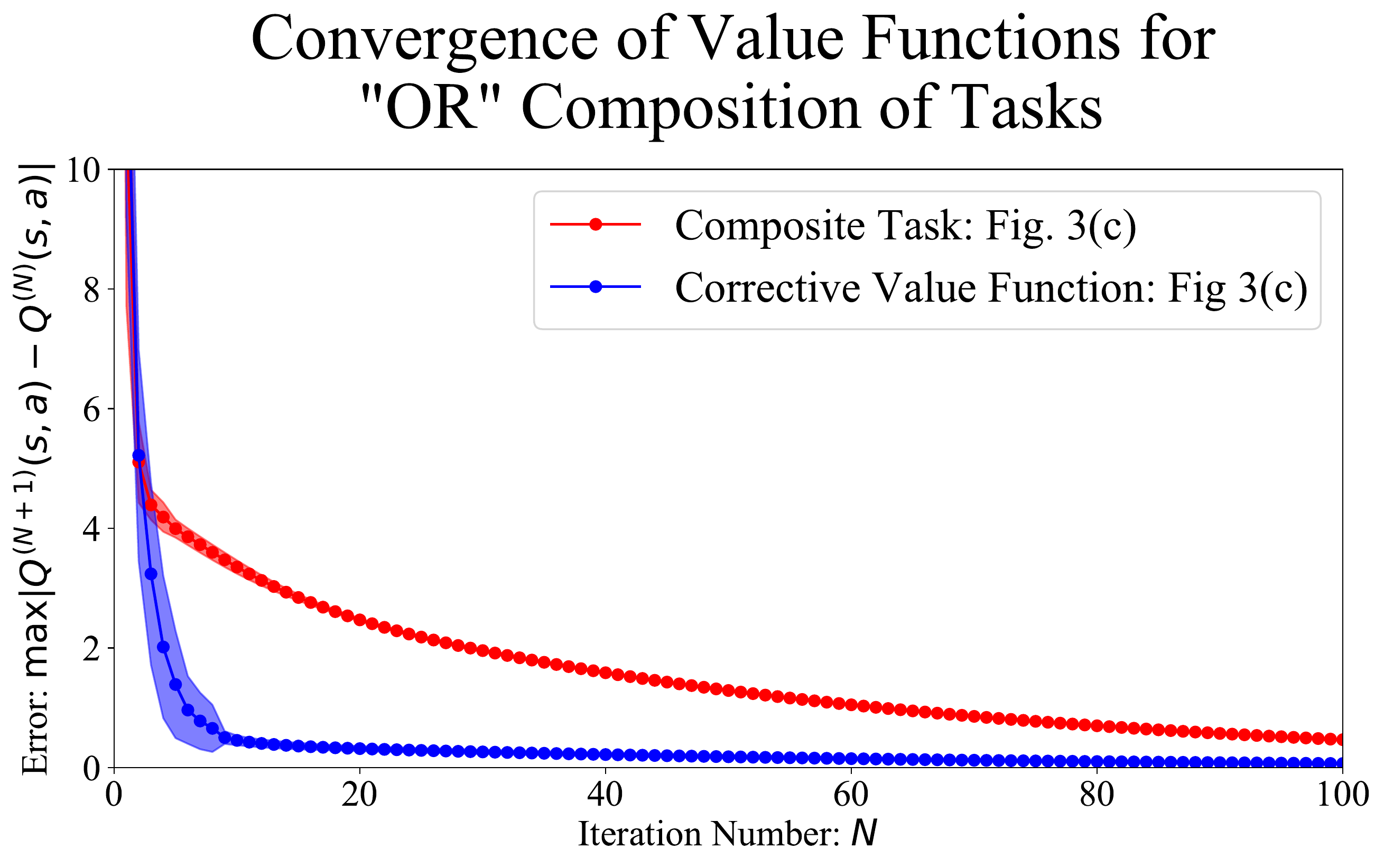}
        \caption{Convergence of the Bellman backup equation (\vocabEq{} \eqref{eq:bellman2}) for the task considered in Fig. \ref{fig:ORcomp_maze}. Red is the composition ($\widetilde{Q}^*$) learned directly. Blue is the corrective value function ($K^*$). Averages are taken over 25 random initializations, and one standard deviation is indicated by the shaded region.}
        \label{fig:ORcomp_converg}
    \end{figure}

    We observe that the corrective value function ($K^*$ defined in Theorem~\ref{thm:composition}) converges much faster than the composite task's value function ($\widetilde{Q}$). Furthermore, the zero shot approximation (Fig. \ref{fig:ORcomp_maze}(d)) is itself quite close to the optimal solution in this case. We conclude that learning $K^*$ is more advantageous than learning $\widetilde{Q}^*$, given the faster convergence time.

    \subsection{Dependence on Size of State Space}
    In the Main Text (Figures 1 and 2), we demonstrated our results on a simple maze with goal states in the bottom corners. We use one solved maze to shape the reward function for the other task (see Corollary \ref{thm:rwd_shaping1}). In this section and the next, we expand this experiment by considering two defining parameters: the size of the maze, and the location of the obstacle. Similar to the Main Text, we take $\beta=3$, $\gamma=0.99$.

    \begin{figure}[ht]
        \centering
        \includegraphics[width=0.43\textwidth]{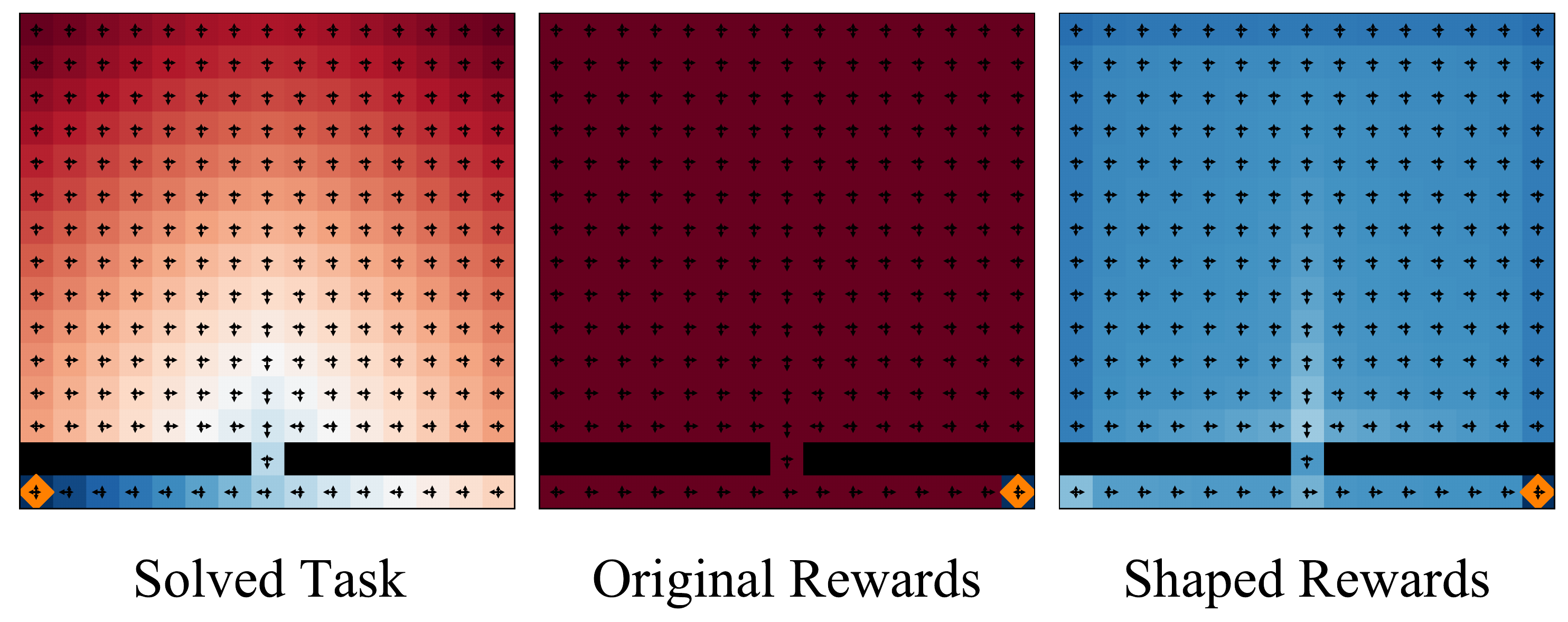}
        \caption{An example of a larger maze (15$\times$15), with a fixed wall height (always one unit above the goal states for all tasks).} 
        \label{fig:15x15maze}
    \end{figure}

    \begin{figure}[ht]
        \centering
        \includegraphics[width=0.43\textwidth]{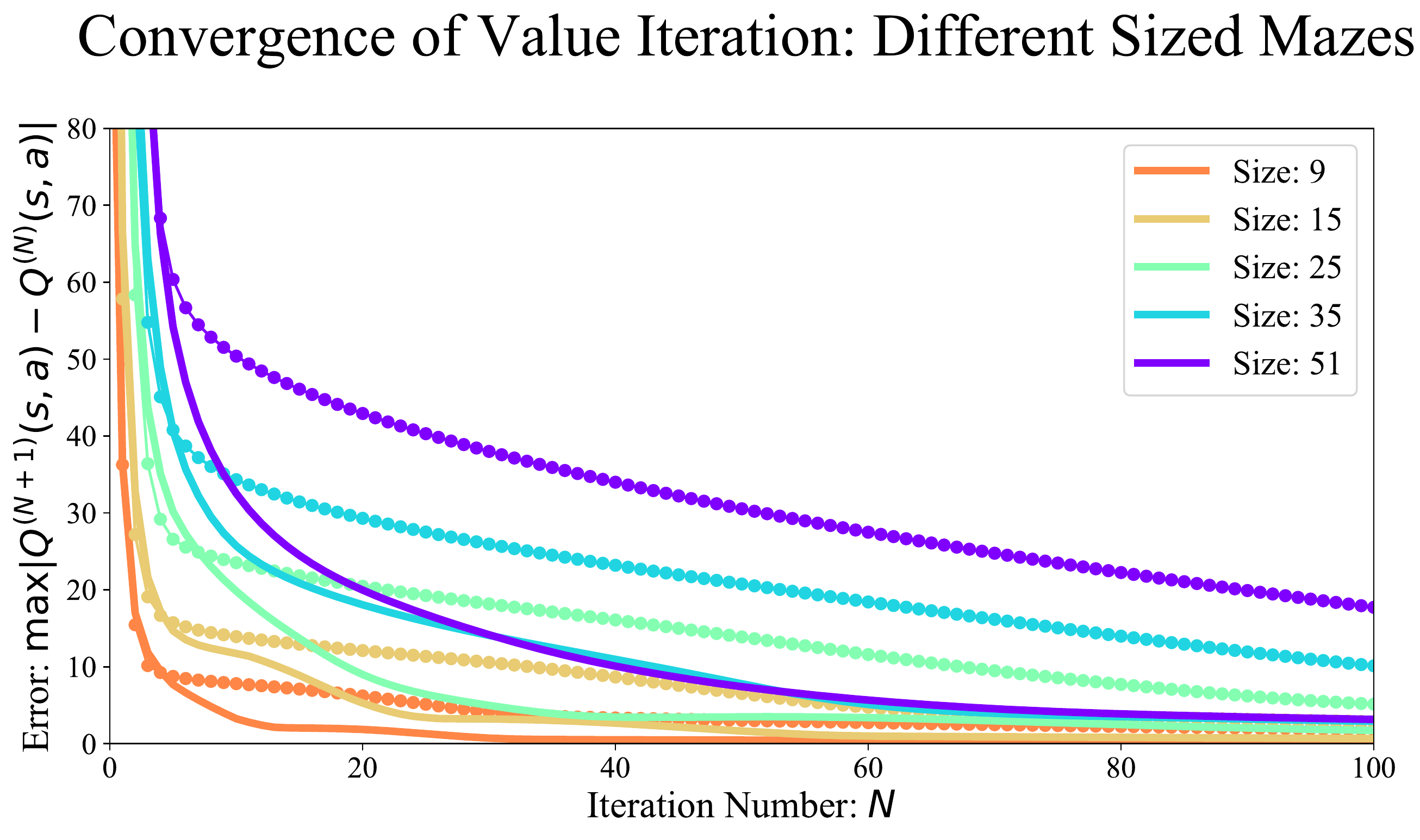}
        \caption{Convergence of the Bellman backup equation (\vocabEq{} \eqref{eq:bellman2}) for the tasks considered in Fig. \ref{fig:ORcomp_maze}. Solid lines indicate solutions with no reward shaping; lines with circles indicate solutions with reward shaping (by using the corresponding solved task, as can be seen in Fig. \ref{fig:15x15maze}). Average is taken over 5 samples (the standard deviation is omitted for clarity in the visualization).}
        \label{fig:goalchange_converg}
    \end{figure}

    As expected, the larger mazes' value function require more iterations until convergence. We observe that the task with shaped rewards converges faster than the task with the original sparse rewards in all cases. Furthermore, we note that larger mazes exhibit a more significant reduction in  training time (required number of iterations to reach some predefined error threshold).

    \subsection{Dependence on Wall Height}
    Similar to the previous section, we expand on the experiment of Figures 1 and 2 in the Main Text; in this experiment, the wall height is varied. Similar to the Main Text, we take $\beta=3$, $\gamma=0.99$.

    \begin{figure}[ht]
        \centering
        \includegraphics[width=0.43\textwidth]{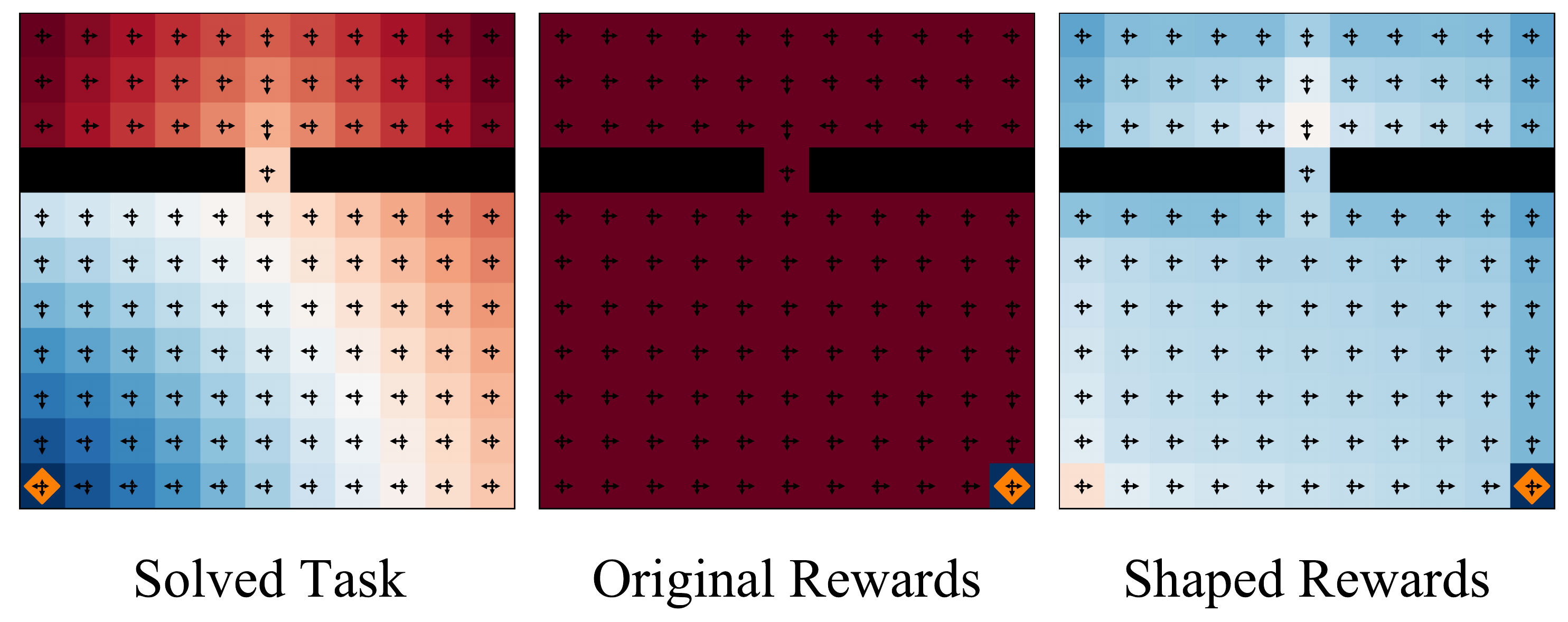}
        \caption{An example of a maze (11$\times$11), with a different wall height compared to the previously considered experiments. The ``wall height" is seven in this example: the wall is placed seven units above the goal states for all tasks.} 
        \label{fig:wallheight}
    \end{figure}

    \begin{figure}[ht]
        \centering
        \includegraphics[width=0.43\textwidth]{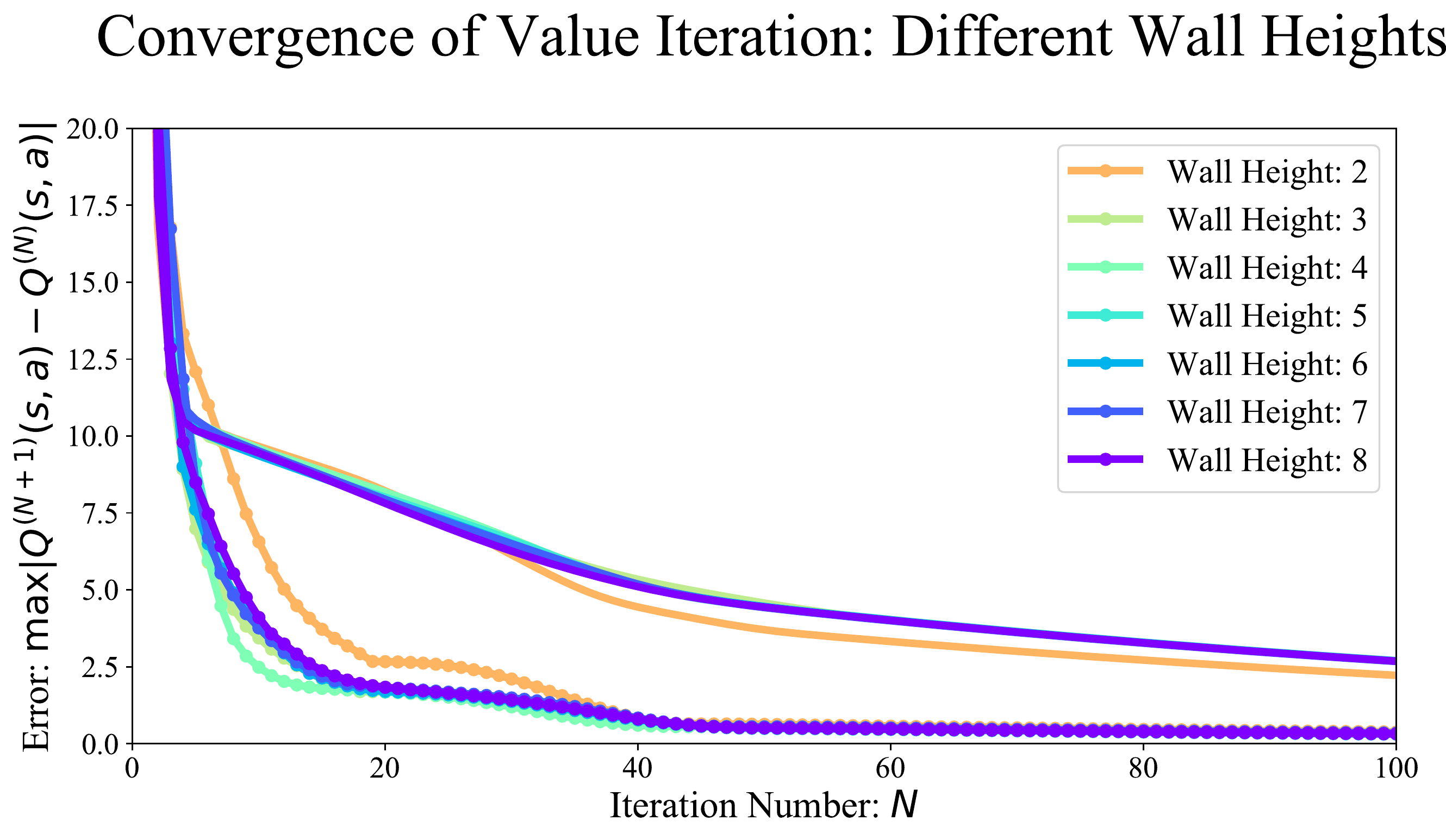}
        \caption{Convergence of the Bellman backup equation (\vocabEq{} \eqref{eq:bellman2}) for the tasks with varying wall height, such as considered in Fig. \ref{fig:wallheight}. Solid lines indicate solutions obtained without reward shaping; lines with circles indicate solutions obtained with reward shaping (by using the solved task as shown in Fig. \ref{fig:wallheight}). Average is taken over 5 samples (the standard deviation is omitted for clarity in the visualization).}
        \label{fig:wallheight_converg}
    \end{figure}

    In this experiment, we use ``wall height" as a proxy for measuring the amount of overlap between the solved task's and the task of interest's value functions. Interestingly, we observe that the convergence rates are approximately the same, independent of the wall height. The value function for the task with shaped rewards converges faster than the task with the original sparse rewards in all cases.

    \section{Proofs}
    We now prove the results presented in the Main Text. We restate the results here before their proof for convenience.
    \subsection{Proof of Theorem \ref{thm:rwd_change}}

    \begin{theorem*}
        \RewardChangeTheorem{app}
    \end{theorem*}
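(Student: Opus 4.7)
The plan is to verify the ansatz $\widetilde{Q}^*(s,a) = Q^*(s,a) + K^*(s,a)$ directly by substituting it into the soft Bellman optimality equation for $\widetilde{Q}^*$ and checking that, together with the definition of $K^*$, it produces an identity. Since the soft Bellman operator is a contraction (as noted in the preliminaries), any bounded function satisfying the Bellman equation for $\widetilde{\T}$ must equal $\widetilde{Q}^*$, so verification suffices.

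First I would write out the Bellman optimality equation
\begin{equation*}
\widetilde{Q}^*(s,a) = \E_{s'\sim p}\Bigl[\widetilde{r}(s,a,s') + \tfrac{\gamma}{\beta}\log \E_{a'\sim \pi_0} e^{\beta \widetilde{Q}^*(s',a')}\Bigr],
\end{equation*}
substitute $\widetilde{r} = r + \kappa$ and the proposed decomposition $\widetilde{Q}^* = Q^* + K^*$, and focus on rewriting the log-expectation term. The key algebraic step, which I expect to be the main technical point, is the change-of-measure identity: using the definition of $\pi^*$ in terms of $\pi_0, Q^*, V^*$, we have $\pi_0(a'|s') e^{\beta Q^*(s',a')} = \pi^*(a'|s')\, e^{\beta V^*(s')}$, so that
\begin{equation*}
\E_{a'\sim \pi_0} e^{\beta(Q^*(s',a') + K^*(s',a'))} = e^{\beta V^*(s')}\, \E_{a'\sim \pi^*} e^{\beta K^*(s',a')}.
\end{equation*}
Pulling the $\gamma V^*(s')$ term out of the log and using that $\E_{s'\sim p}[r(s,a,s') + \gamma V^*(s')] = Q^*(s,a)$ by the Bellman equation for $Q^*$, cancellation of $Q^*(s,a)$ on both sides leaves exactly the defining Bellman equation \eqref{eq:K_backup1app} for $K^*$. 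This establishes \eqref{eq:Q=Q+K rwd_changeapp}.

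The remaining two claims then follow by routine substitution. For \eqref{eq:vtilda=v+vk for rwd changeapp}, I would plug $\widetilde{Q}^* = Q^* + K^*$ into the definition $\widetilde{V}^*(s) = \tfrac{1}{\beta}\log \sum_a \pi_0(a|s) e^{\beta \widetilde{Q}^*(s,a)}$ and apply the same change-of-measure identity to factor out $e^{\beta V^*(s)}$, leaving $V^*(s) + V_K^*(s)$. For \eqref{eq:rwd change policies are the sameapp}, I would substitute into the Boltzmann expression for $\widetilde{\pi}^*$; again the $\pi_0 e^{\beta Q^*}$ factor converts to $\pi^* e^{\beta V^*}$, and the $e^{\beta V^*(s)}$ factors cancel between numerator and denominator, yielding $\pi_K^*(a|s)$.

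The main obstacle is really bookkeeping rather than a deep difficulty: one must be careful that the prior policy appearing in the $K^*$ Bellman equation is $\pi^*$ (from the original task) and not $\pi_0$, and that the inner expectation in $\widetilde{Q}^*$'s Bellman equation is with respect to $\pi_0$. The change-of-measure identity bridging these two is the heart of the proof. A minor subtlety to address is existence and uniqueness of the fixed point $K^*$: since the corresponding soft Bellman operator (with prior $\pi^*$ and reward $\kappa$) is again a contraction, $K^*$ is well-defined and bounded, so invoking uniqueness of the fixed point for $\widetilde{\T}$'s Bellman operator legitimately identifies $Q^* + K^*$ with $\widetilde{Q}^*$.
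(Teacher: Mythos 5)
Your proposal is correct and takes essentially the same approach as the paper: the same decomposition $\widetilde{Q}^* = Q^* + K^*$, the same change-of-measure identity $\pi_0(a'|s')\,e^{\beta Q^*(s',a')} = \pi^*(a'|s')\,e^{\beta V^*(s')}$ as the central algebraic step, and the same proofs of the $\widetilde{V}^*$ and $\widetilde{\pi}^*$ claims by substitution. The only difference is packaging --- the paper runs the argument as an induction over Bellman backup iterates $\widetilde{Q}^{(N)} = Q^* + K^{(N)}$ and passes to the limit, whereas you verify the fixed-point equation directly and invoke uniqueness via contraction; the two are interchangeable here.
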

    \begin{proof}
        Let reward functions $r$ and $\widetilde{r} = r+\kappa$ be given, with corresponding optimal value functions $Q^*$ and $\widetilde{Q}^*$. First note the following useful identity for the optimal policy (corresponding to $Q^*$),
        \begin{equation*}
            \pi^*(a|s) = \frac{\pi_0(a|s) e^{\beta Q^*(s,a)}}{\sum_{a'}\pi_0(a'|s) e^{\beta Q^*(s,a')}}
        \end{equation*}
        or equivalently,
        \begin{equation}\label{eq:useful_relation_piQV}
            \frac{\pi^*(a|s)}{\pi_0(a|s)}e^{\beta V^*(s)} = e^{\beta Q^*(s,a)}.
        \end{equation}

        We will prove \vocabEq{} \eqref{eq:K_backup1} by induction, relating $\widetilde{Q}^{(N)}$, $Q^*$, and $K^{(N)}$ at each backup step.
        \begin{equation}\label{eq:rwd_KQ_relation}
            \widetilde{Q}^{(N)}(s,a) = Q^*(s,a) + K^{(N)}(s,a)
        \end{equation}

        Using the initialization $\widetilde{Q}^{(0)} = Q^*$, $K^{(0)}=0$, we write the iteration for the Bellman backup equation and insert the inductive assumption to obtain
        \begin{align*}
             & \widetilde{Q}^{(N+1)}(s,a) = \E_{s' \sim{} p} \biggl[ \widetilde{r}(s,a,s') + \\ &\frac{\gamma}{\beta} \log \E_{a' \sim{} \pi_0} \exp \left( \beta Q^*(s',a') + \beta K^{(N)}(s',a')\right) \biggr]
        \end{align*}
        Notice that ending here yields the offline result mentioned in the Main Text: the expectation over actions is over the prior policy $\pi_0$ (which is the prior policy for the task of interest, $\widetilde{\T}$). Therefore, one can use old trajectory data (tuples of $\{s,a,r,s',a'\}$) with appropriately re-labelled rewards $r \to \widetilde{r}$.

        Continuing instead, we can use \vocabEq{} \eqref{eq:useful_relation_piQV}, solving for $\exp \beta Q^*$ and re-writing the expectation over actions (notice the cancellation with $\pi_0$):
        \begin{align*}
             & \widetilde{Q}^{(N+1)}(s,a) = \E_{s' \sim{} p} \biggl[ r(s,a,s') + \gamma V^*(s') + \\ &\kappa(s,a,s') +  \frac{\gamma}{\beta} \E_{s' \sim{} p} \log \E_{a' \sim{} \pi^*} \exp \left( \beta K^{(N)}(s',a')\right) \biggr]
        \end{align*}
        Noting that we can simplify this by using the Bellman optimality equation for $Q^*$, we now have

        \begin{align*}
             & \widetilde{Q}^{(N+1)}(s,a) = Q^*(s,a) + \E_{s' \sim{} p} \biggl[ \kappa(s,a,s') + \\ &\frac{\gamma}{\beta} \E_{s' \sim{} p} \log \E_{a' \sim{} \pi^*}\exp \left( \beta K^{(N)}(s',a')\right) \biggr]
        \end{align*}

        \noindent In applying the backup equation for $K$,
        \begin{align*}\label{eq:K_backup2}
            K^{(N+1)}(s,a) & = \E_{s' \sim{} p} \biggl[ \kappa(s,a,s') + \\ &\frac{\gamma}{\beta} \E_{s' \sim{} p} \log \E_{a' \sim{} \pi^*} e^{\beta K^{(N)}(s',a')} \biggr]
        \end{align*}
        we see that
        \begin{equation}
            \widetilde{Q}^{(N+1)}(s,a) = Q^*(s,a) + K^{(N+1)}(s,a).
        \end{equation}
        Since the Bellman backup equation converges to the optimal value function from any bounded initialization:
        \begin{align*}
            \lim_{N \to \infty} \widetilde{Q}^{(N)} & = \widetilde{Q}^* \\
            \lim_{N \to \infty} K^{(N)}             & = K^*,
        \end{align*}
        this completes the proof of \vocabEq{} \eqref{eq:Q=Q+K rwd_change}. Note that since this is a relation regarding optimal value functions, it is valid for any initializations $\widetilde{Q}^{(0)}$ and $K^{(0)}$.

        We now prove \vocabEq{} \eqref{eq:vtilda=v+vk for rwd change}. By definition,
        \begin{equation}
            e^{\beta \widetilde{V}^*(s)} = \sum_a \pi_0(a|s)e^{\beta (Q^*(s,a) +  K^*(s,a))}
        \end{equation}
        Again using \vocabEq{} \eqref{eq:useful_relation_piQV} on $Q^*$ and substituting for $\pi_0(a \vert s) e^{\beta Q^*(s,a)}$, we have
        \begin{equation}
            e^{\beta \widetilde{V}^*(s)} = \sum_a \pi^*(a|s)e^{\beta V^*(s)} e^{\beta K^*(s,a)}
        \end{equation}
        By using the definition of $V^*_K(s)$ it immediately follows that
        $\widetilde{V}^*(s) = V^*(s) + V_K^*(s)$.
        Finally, we prove that $\widetilde{\pi}^*(a \vert s) = \pi_K^*(a \vert s)$ using the previous relations as follows:
        \begin{equation}
            \pi_K^*(a|s) = \frac{\pi^*(a|s) e^{\beta K^*(s,a)}}{e^{\beta V_K^*(s)}}
        \end{equation}
        \begin{equation}
            \pi_K^*(a|s) = \frac{\pi^*(a|s) e^{\beta \widetilde{Q}^*(s,a)} e^{-\beta Q^*(s,a)
                    }}{e^{\beta V_K^*(s)}}
        \end{equation}
        Using \vocabEq{} \eqref{eq:useful_relation_piQV} again to substitute for $\pi^*(a \vert s) e^{-\beta Q^*(s,a)}$:
        \begin{equation}
            \pi_K^*(a|s) = \frac{\pi_0(a|s) e^{\beta \widetilde{Q}^*(s,a)}}{e^{\beta (V_K^*(s)+ V^*(s))}}.
        \end{equation}
        Finally, using \vocabEq{} \eqref{eq:vtilda=v+vk for rwd change}, we find that indeed $\widetilde{\pi}^*(a \vert s) = \pi_K(a \vert s)$.
    \end{proof}

    \subsection{Proof of Lemma \ref{lem:policy_change}}
    \begin{lemma*}
        \PolicyLemma{app}
    \end{lemma*}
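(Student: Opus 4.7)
The plan is to verify directly, by substitution, that the candidate $\widetilde{Q}^*(s,a) = Q^*(s,a) + \frac{1}{\beta}\log\frac{\pi_0(a\vert s)}{\pi_1(a\vert s)}$ satisfies the soft Bellman optimality equation for $\widetilde{\T}$ (which uses prior policy $\pi_1$ and reward $\widetilde r$), and then invoke uniqueness of the fixed point of the soft Bellman operator (a contraction, as already cited in the Preliminaries) to conclude this candidate is in fact $\widetilde{Q}^*$. The $\widetilde V^*$ and $\widetilde\pi^*$ identities will then be immediate algebraic consequences of the definitions $V^*(s) = \frac{1}{\beta}\log\sum_a \pi_1(a\vert s)e^{\beta \widetilde Q^*(s,a)}$ and $\widetilde\pi^*(a\vert s) \propto \pi_1(a\vert s)e^{\beta \widetilde Q^*(s,a)}$.

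The key observation that makes the verification work is that exponentiating the candidate gives
\[
e^{\beta \widetilde Q^*(s',a')} = \frac{\pi_0(a'\vert s')}{\pi_1(a'\vert s')}\,e^{\beta Q^*(s',a')},
\]
so the Radon--Nikodym factor $\pi_0/\pi_1$ exactly reweights the prior in the log-sum-exp term: $\E_{a'\sim \pi_1}[\pi_0(a'\vert s')/\pi_1(a'\vert s')\, e^{\beta Q^*(s',a')}] = \E_{a'\sim \pi_0} e^{\beta Q^*(s',a')}$. Plugging this into the Bellman RHS for $\widetilde{\T}$, the $\gamma/\beta$-log term collapses to $\gamma V^*(s')$, and the reward shift $\frac{1}{\beta}\log(\pi_0/\pi_1)$ in $\widetilde r$ lines up with the same term already appended to $Q^*$ on the left-hand side. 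After subtracting that common term, what remains is precisely the soft Bellman optimality equation for $Q^*$ under the original task $\T$, which holds by hypothesis.

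For the state-value identity, the same substitution gives $\pi_1(a\vert s) e^{\beta \widetilde Q^*(s,a)} = \pi_0(a\vert s) e^{\beta Q^*(s,a)}$ termwise, so the partition sum defining $\widetilde V^*(s)$ coincides with the one defining $V^*(s)$; and for the policy identity, both numerator and denominator of $\widetilde\pi^*(a\vert s) = \pi_1(a\vert s)e^{\beta \widetilde Q^*(s,a)}/e^{\beta \widetilde V^*(s)}$ reduce, by the same two substitutions, to those of $\pi^*(a\vert s)$.

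There is no real obstacle here: the result is essentially an algebraic identity exploiting that a reward offset of $\frac{1}{\beta}\log(\pi_0/\pi_1)$ compensates exactly for the change of measure in the entropy regularizer. The only thing to be careful about is that the shifted reward $\widetilde r$ remains bounded (so that the Bellman operator for $\widetilde{\T}$ is still a contraction and the fixed point unique), which follows from the assumption in the Preliminaries that $\pi_0$ is absolutely continuous with respect to any trial policy and $\pi_1>0$, keeping $\log(\pi_0/\pi_1)$ finite on the support.
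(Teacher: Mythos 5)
Your proposal is correct. The algebraic heart of your argument --- the change-of-measure identity $\E_{a'\sim \pi_1}\left[\tfrac{\pi_0(a'\vert s')}{\pi_1(a'\vert s')}e^{\beta Q^*(s',a')}\right]=\E_{a'\sim \pi_0}\,e^{\beta Q^*(s',a')}$, so that the reward offset $\tfrac{1}{\beta}\log(\pi_0/\pi_1)$ exactly compensates the change of prior --- is the same cancellation the paper exploits. The difference is purely in how the argument is packaged: the paper proves the relation $\widetilde{Q}^{(N)}=Q^{(N)}+\tfrac{1}{\beta}\log\tfrac{\pi_0}{\pi_1}$ by induction over Bellman backup iterations from the zero initialization and then passes to the limit, whereas you verify in one step that the candidate $Q^*+\tfrac{1}{\beta}\log\tfrac{\pi_0}{\pi_1}$ is a fixed point of the soft Bellman operator for $\widetilde{\T}$ and invoke uniqueness of that fixed point via the contraction property. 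Both rest on the same contraction fact; your version is shorter and arguably cleaner for this lemma, while the paper's inductive style is uniform with its other proofs (where tracking the relation at each finite iteration is what yields the offline-learning remarks). Your closing point about boundedness of the shifted reward (so that the operator for $\widetilde{\T}$ remains a contraction with a unique fixed point) is a necessary hypothesis for the uniqueness step and is handled correctly; the derivations of $\widetilde{V}^*=V^*$ and $\widetilde{\pi}^*=\pi^*$ from the termwise identity $\pi_1(a\vert s)e^{\beta\widetilde{Q}^*(s,a)}=\pi_0(a\vert s)e^{\beta Q^*(s,a)}$ match the paper's.
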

    \begin{proof}
        We again prove the result by induction, through successive iterations of the Bellman backup equation. By proving that
        \begin{equation}\label{eq:inductive equation}
            \widetilde{Q}^{(N)}(s,a) = Q^{(N)}(s,a) + \frac{1}{\beta} \log \frac{\pi_{0}(a \vert s)}{\pi_{1}(a \vert s)}
        \end{equation}
        holds for all $N$, then letting $N \to \infty$, we will have the desired result.
        We set $\widetilde{Q}^{(0)}=Q^{(0)}=0$. The base case $N=1$ is trivial, as it only considers the rewards defined above.
        We therefore begin with the inductive assumption, assuming \vocabEq{} \eqref{eq:inductive equation} holds for some $N>1$.
        We then wish to prove \vocabEq{} \eqref{eq:inductive equation} for the $(N+1)^{\text{th}}$ step of the backup equation. To this end, we write the corresponding backup equation for $\widetilde{Q}$ and apply the inductive assumption:
        \begin{align*}
            \widetilde{Q}^{(N+1)}(s,a) & =  \E_{s' \sim{} p} \biggl[ r(s,a,s') + \frac{1}{\beta} \log \frac{\pi_{0}(a \vert s)}{\pi_{1} (a \vert s)} + \\ &\frac{\gamma}{\beta} \log \E_{a' \sim{} \pi_{1}}  \exp \beta \widetilde{Q}^{(N)}(s',a') \biggr] \\
            \widetilde{Q}^{(N+1)}(s,a) & = \E_{s' \sim{} p} \biggl[ r(s,a,s') + \frac{1}{\beta} \log \frac{\pi_{0}(a \vert s)}{\pi_{1}(a \vert s) } +  \\ &\frac{\gamma}{\beta} \log \E_{a' \sim{} \pi_{1}} \frac{\pi_{0}(a' \vert s')}{\pi_{1}(a' \vert s')} \exp \beta Q^{(N)}(s',a') \biggr] \\
            \widetilde{Q}^{(N+1)}(s,a) & = \E_{s' \sim{} p} \biggl[ r(s,a,s') + \frac{1}{\beta} \log \frac{\pi_{0}(a \vert s)}{\pi_{1}(a \vert s) }  + \\ &\frac{\gamma}{\beta} \log \E_{a' \sim{} \pi_0} \exp \beta Q^{(N)}(s',a') \biggr] \\
        \end{align*}
        Recognizing the latter term (together with $r$) is simply $Q^{(N+1)}(s,a)$, we have:
        \begin{equation}
            \widetilde{Q}^{(N+1)}(s,a) = Q^{(N+1)}(s,a) + \frac{1}{\beta} \log \frac{\pi_{0}(a \vert s)}{\pi_{1}(a \vert s)}
        \end{equation}
        By taking the limit $N \to \infty$, we have the desired result. To prove $\widetilde{\pi}^* = \pi^*$, we simply write out their respective definitions and apply the previously obtained result.
    \end{proof}

    \subsection{Proof of Corollary \ref{thm:rwd_shaping1}}

    \begin{corollary*}
        \RewardShapingCorollary{app}
    \end{corollary*}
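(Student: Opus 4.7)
The plan is to derive Corollary~\ref{thm:rwd_shaping1} as a direct consequence of Theorem~\ref{thm:rwd_change} and Lemma~\ref{lem:policy_change}, with the soft Bellman equation for $Q^*$ providing the final step that rewrites the reward shift in potential-based form. First I would apply Theorem~\ref{thm:rwd_change} to the pair $(\T,\widetilde{\T})$: this identifies $K^*(s,a) \doteq \widetilde{Q}^*(s,a) - Q^*(s,a)$ as the optimal action-value function of an auxiliary task whose reward is $\kappa = \widetilde{r} - r$ and whose prior policy is $\pi^*$. The obstacle to identifying this auxiliary task with $\bar{\T}$ is that its prior is $\pi^*$ rather than the required $\pi_0$.

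To correct the prior, I would invoke Lemma~\ref{lem:policy_change}: augmenting the reward by $\tfrac{1}{\beta}\log\frac{\pi^*(a|s)}{\pi_0(a|s)}$ replaces the prior $\pi^*$ with $\pi_0$ and shifts the optimal value by the same quantity. The resulting optimal action-value function is
\[
\widetilde{Q}^*(s,a) - Q^*(s,a) + \tfrac{1}{\beta}\log\frac{\pi^*(a|s)}{\pi_0(a|s)},
\]
which already matches the first form claimed for $\bar{Q}^*$; applying the softmax identity $\tfrac{1}{\beta}\log\frac{\pi^*(a|s)}{\pi_0(a|s)} = Q^*(s,a) - V^*(s)$ (immediate from the definition of $\pi^*$) collapses this to $\widetilde{Q}^*(s,a) - V^*(s)$, giving the second claimed form.

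The final and, to my mind, subtlest step is to reconcile the reward produced by this construction, namely $\widetilde{r}(s,a,s') - r(s,a,s') + Q^*(s,a) - V^*(s)$, with the stated potential-based shaping $\bar{r}(s,a,s') = \widetilde{r}(s,a,s') + \gamma V^*(s') - V^*(s)$. The soft Bellman equation $Q^*(s,a) = \E_{s'\sim p}[\,r(s,a,s') + \gamma V^*(s')\,]$ shows that these two rewards have equal conditional expectation over $s'$ given $(s,a)$; since the soft Bellman backup only depends on rewards through this conditional expectation, they induce the same optimal value function, establishing $\bar{Q}^* = \widetilde{Q}^* - V^*$ for the task with reward $\bar{r}$. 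The policy invariance $\bar{\pi}^* = \widetilde{\pi}^*$ is then immediate from the softmax formula $\bar{\pi}^*(a|s) \propto \pi_0(a|s)\exp(\beta\bar{Q}^*(s,a)) = \pi_0(a|s)\exp(\beta\widetilde{Q}^*(s,a))\exp(-\beta V^*(s))$, in which the $a$-independent factor $\exp(-\beta V^*(s))$ cancels under normalization, yielding $\bar{\pi}^* = \widetilde{\pi}^*$.
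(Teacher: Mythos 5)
Your proof is correct and follows the same route as the paper's, whose entire argument is the single sentence that the corollary is a direct application of Lemma~\ref{lem:policy_change} to Theorem~\ref{thm:rwd_change}. You supply the details the paper omits---most valuably the final reconciliation step, namely that $Q^*(s,a) - r(s,a,s')$ and $\gamma V^*(s')$ agree in conditional expectation over $s'$ under $p$, and that the soft Bellman operator depends on the reward only through this conditional expectation, which is exactly what is needed to pass from the reward produced by the two cited results to the stated potential-based form of $\bar{r}$.
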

    \begin{proof}
        This result is a direct application of Lemma \ref{lem:policy_change} to Theorem \ref{thm:rwd_change}.

    \end{proof}

    \subsection{Proof of Lemma \ref{lem:inv_rwd}}
    \begin{lemma*}\cite{cao2021identifiability}
        \CaoLemma{app}
    \end{lemma*}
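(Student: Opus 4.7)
The plan is to verify the statement by direct substitution, using an ansatz for $Q^*$ that is forced by inverting the softmax relating $Q^*$, $V^*$, and $\pi^*$. Specifically, if $V^* = v$ and $\pi^* = \pi$ are to hold, then the identity $\pi^*(a\vert s) \propto \pi_0(a\vert s)e^{\beta Q^*(s,a)}$ combined with the definition of $V^*$ forces
$$Q^*(s,a) = \frac{1}{\beta}\log\frac{\pi(a\vert s)}{\pi_0(a\vert s)} + v(s).$$
I would take this as the ansatz and show (i) it reproduces $V^* = v$ and $\pi^* = \pi$, and (ii) it is a fixed point of the soft Bellman optimality operator with the reward $R$ given in the statement. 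By contraction of the soft Bellman operator, this fixed point is the unique optimal action-value function.

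For (i), a one-line computation gives $\sum_a \pi_0(a\vert s) e^{\beta Q^*(s,a)} = e^{\beta v(s)} \sum_a \pi(a\vert s) = e^{\beta v(s)}$, hence $V^*(s) = v(s)$, and then $\pi^*(a\vert s) = \pi_0(a\vert s)e^{\beta Q^*(s,a)}/e^{\beta V^*(s)} = \pi(a\vert s)$. For (ii), I plug the ansatz into the right-hand side of the soft Bellman equation: using $V^*(s') = v(s')$, the reward contributes $\frac{1}{\beta}\log\frac{\pi(a\vert s)}{\pi_0(a\vert s)} + v(s) - \gamma v(s')$, while the discounted future value contributes $\gamma v(s')$, and these cancel under $\E_{s'\sim p}$ to leave exactly $Q^*(s,a)$. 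This establishes existence.

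For the uniqueness claim, I would argue in two stages. First, the requirements $V^* = v$ and $\pi^* = \pi$ pin down $Q^*$ completely via the softmax identity above, so the Bellman equation determines the expected reward $\E_{s'\sim p}[R(s,a,s')]$ as $\frac{1}{\beta}\log\frac{\pi(a\vert s)}{\pi_0(a\vert s)} + v(s) - \gamma\E_{s'\sim p}[v(s')]$. Second, once we restrict to reward functions of the canonical form $R(s,a,s') = g(s,a) + v(s) - \gamma v(s')$, the function $g$ is uniquely forced to equal $\frac{1}{\beta}\log\frac{\pi(a\vert s)}{\pi_0(a\vert s)}$, yielding exactly the $R$ in the statement.

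There is no real obstacle here; the computation is essentially bookkeeping once the ansatz is identified. The only subtle point is recognizing that the ansatz is not a guess but is \emph{dictated} by the softmax structure of the optimal entropy-regularized policy, so that the Bellman equation then directly reads off the required reward by a cancellation of the form $v(s) - \gamma v(s') + \gamma v(s')$.
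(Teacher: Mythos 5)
Your proof is correct, but it takes a genuinely different route from the paper's. The paper does not prove this lemma at all: it cites Theorem 1 of Cao et al.\ and merely remarks that the only amendment needed is to track the $s'$-dependence of $R$ and the associated expectation over next states. You instead give a self-contained verification: the softmax identity $\pi^*(a\vert s)\,e^{\beta V^*(s)} = \pi_0(a\vert s)\,e^{\beta Q^*(s,a)}$ forces the ansatz $Q^*(s,a) = \frac{1}{\beta}\log\frac{\pi(a\vert s)}{\pi_0(a\vert s)} + v(s)$, which you check reproduces $V^*=v$ and $\pi^*=\pi$ and is a fixed point of the soft Bellman operator with reward $R$, with the cancellation $v(s)-\gamma v(s')+\gamma v(s')$ doing the work; contraction then gives that this is \emph{the} optimal value function for $R$. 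This buys two things the paper's deferral does not. First, it is verifiable within the paper's own formalism without consulting the external reference. Second, it surfaces the one genuine subtlety introduced by the $s'$-dependence: the Bellman equation only pins down $\E_{s'\sim p}[R(s,a,s')]$, not $R(s,a,s')$ pointwise, so "uniqueness" must be read as uniqueness of the expected reward (equivalently, uniqueness of $g$ within the canonical form $g(s,a)+v(s)-\gamma v(s')$). Your two-stage uniqueness argument handles this correctly and is arguably more careful than the paper's one-line remark. The only hypothesis worth stating explicitly is that $v$ is bounded and $\pi,\pi_0>0$, so that $R$ is bounded and the contraction argument applies; this is implicit in the paper's setting.
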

    \begin{proof}
        We note the only distinction between our statement and the original result (Theorem 1 of \cite{cao2021identifiability}) is the dependence on $s'$. However, by defining $R$ as we have above, their proof must only be amended by keeping track of $s'$ and the relevant expectation over next states.
    \end{proof}

    \subsection{Proof of Theorem \ref{thm:rwd_shaping_ng}}
    \begin{theorem*}
        \PBRSTheorem{app}
    \end{theorem*}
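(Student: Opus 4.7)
The plan is to deduce Theorem \ref{thm:rwd_shaping_ng} by combining Corollary \ref{thm:rwd_shaping1} with Lemma \ref{lem:inv_rwd}. Corollary \ref{thm:rwd_shaping1} already yields a PBRS-type conclusion, but the shaping potential there is constrained to be the optimal state-value function $V^*$ of some previously solved reward varying task. Lemma \ref{lem:inv_rwd} removes exactly this constraint: an arbitrary bounded function on $\s$ can always be realized as the optimal state-value function of some reverse-engineered task in the same underlying environment $\langle \s,\A,p,\gamma,\beta\rangle$.

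Concretely, the first step is to invoke Lemma \ref{lem:inv_rwd} with $v = \Phi$ and an arbitrary choice of $\pi$ (for instance $\pi = \pi_0$), which produces a bounded auxiliary reward function $R_\Phi$ whose associated task $\T_\Phi = \langle \s,\A,p,R_\Phi,\gamma,\beta,\pi_0\rangle$ has optimal state-value function equal to $\Phi$ identically. Boundedness of $\Phi$ guarantees that $R_\Phi$ is well-defined and bounded, so $\T_\Phi$ is a legitimate entropy-regularized task.

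Next, apply Corollary \ref{thm:rwd_shaping1} with $\T_\Phi$ playing the role of the ``previously solved'' task $\T$ in the corollary and the original task of the theorem playing the role of $\widetilde{\T}$. The shaping correction $\gamma V^*(s') - V^*(s)$ appearing in \eqref{eq:rwd_shaping} then reduces to exactly $\gamma \Phi(s') - \Phi(s)$, and the corollary's $\bar{\T}$ therefore coincides with the task $\T'$ of the present theorem. Reading off the corollary directly gives $\widetilde{Q}^*(s,a) = Q^*(s,a) - \Phi(s)$ and $\widetilde{\pi}^*(a\vert s) = \pi^*(a\vert s)$, establishing \eqref{eq:in thm rwd shaping q = q - phi} and the claim about optimal policies.

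Finally, the state-value identity \eqref{eq:in thm rwd shaping v = v - phi} follows from the definition \eqref{eq:definition of v}: substituting $\widetilde{Q}^*(s,a) = Q^*(s,a) - \Phi(s)$ allows the $s$-dependent factor $e^{-\beta \Phi(s)}$ to be pulled outside the sum over $a$, and then $\tfrac{1}{\beta}\log$ of the result is precisely $V^*(s) - \Phi(s)$. I do not anticipate a real obstacle here; the only subtlety worth flagging is verifying that Lemma \ref{lem:inv_rwd} is applicable for every bounded $\Phi$ without further hypotheses on $\Phi$, which is clear from the lemma statement, and that $\T_\Phi$ shares the environment $(\s,\A,p,\gamma,\beta,\pi_0)$ with $\T$ so that Corollary \ref{thm:rwd_shaping1}, which requires reward varying tasks, indeed applies.
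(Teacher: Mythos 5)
Your proposal is correct and takes essentially the same route as the paper: the paper's appendix proof also uses Lemma \ref{lem:inv_rwd} (with $\pi=\pi_0$, $v=\Phi$) to realize $\Phi$ as the optimal value function of an auxiliary task with reward $\rho(s,a,s')=\Phi(s)-\gamma\Phi(s')$, and then applies Theorem \ref{thm:rwd_change} together with Lemma \ref{lem:policy_change} --- which is exactly the content of Corollary \ref{thm:rwd_shaping1} that you invoke as a black box. The only difference is one level of packaging, and your verification that the auxiliary task shares $(\s,\A,p,\gamma,\beta,\pi_0)$ so that the reward-varying hypothesis holds is the right point to check.
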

    We can now prove Theorem \ref{thm:rwd_shaping_ng} in light of the previous results:
    \begin{proof}
        Let $\Phi(s)$ be an arbitrary (bounded) function. Consider the reward varying tasks $\tau=\langle \s,\A,p,\rho,\gamma, \beta, \pi_0 \rangle$  and $\widetilde{\T}=\langle \s,\A,p,\widetilde{r},\gamma, \beta, \pi_0 \rangle$, where
        \begin{equation}
            \rho(s,a,s') = \Phi(s) - \gamma \Phi(s').
        \end{equation}
        Notice that by construction, we have taken the task $\tau$ to have optimal value function $\Phi(s)$ and optimal policy $\pi_0$ for simplicity. This is guaranteed by Lemma \ref{lem:inv_rwd}.

        Using Theorem \ref{thm:rwd_change}, we define the corrective-value function's task \mbox{$\T_K=\langle \s,\A,p,\kappa=\widetilde{r}-\rho,\gamma,\beta, \pi_\rho=\pi_0\rangle$}.

        By applying Lemma \ref{lem:policy_change} to $\T_K$ we can re-write this task as $\T_K = \langle \s,\A,p,\kappa',\gamma, \beta, \pi_0 \rangle$
        where
        \begin{equation}
            \kappa'(s,a,s') = \widetilde{r}(s,a,s') - \rho(s,a,s')
        \end{equation}

        We can re-write the above expression for $\kappa$ as:
        \begin{equation}
            \kappa(s,a,s') = \widetilde{r}(s,a,s') + \gamma \Phi^*(s') - \Phi^*(s)
        \end{equation}
        Due to the result of Theorem \ref{thm:rwd_change}, the optimal policies corresponding to $\widetilde{\T}$ and $\T_K$ are the same, $\widetilde{\pi}^*=\pi_K^*$, and the calculations leading to \vocabEq{}s  \eqref{eq:in thm rwd shaping q = q - phi} and \eqref{eq:in thm rwd shaping v = v - phi} are straightforward, using Theorem \ref{thm:rwd_change}. Following \cite{ng_shaping}, we call $F(s,a,s')=\gamma \Phi(s') - \Phi(s)$ the \textit{potential-based reward shaping function}.

    \end{proof}

    \subsection{Proof of Remark \ref{rmk:non-optimal shaping}}
    \begin{remark*}
        \RobustRemark{app}
    \end{remark*}
    \begin{proof}
        We will prove the result by using induction to show that the equality holds throughout the soft policy evaluation (\cite{pmlr-v80-haarnoja18b}):
        \begin{equation}\label{eq:remark proof}
            \widetilde{Q}^{(k)}(s,a) = Q^{(k)}(s,a) - \Phi(s)
        \end{equation}
        We use $k$ to distinguish that the previous equation is for policy iteration as opposed to the Bellman backup iteration used in other proofs. We subsequently take the limit $k\to \infty$ to obtain the result of Remark~\ref{rmk:non-optimal shaping}.

        We omit the $\pi$ superscript to ease the notation.
        First, we define $\widetilde{Q}^{(0)}(s,a) \doteq -\Phi(s)$, $Q^{(0)}\doteq 0$. We note that \vocabEq{} \eqref{eq:remark proof} is trivially satisfied for $k=0$, and begin by using the inductive assumption: assuming \vocabEq{} \eqref{eq:remark proof} holds for some $k>0$. Then,

        \begin{align*}
            \widetilde{Q}^{(k+1)}(s,a) & = \E_{s' \sim{} p} \biggl[ r(s,a,s') + \\ &\gamma \Phi(s') - \Phi(s) + \\ &\gamma \E_{a' \sim{} \pi} \left[\widetilde{Q}^{(k)}(s',a') - \frac{1}{\beta}\log\frac{\pi(a'|s')}{\pi_0(a'|s')} \right] \biggr]
        \end{align*}
        Using the inductive assumption in the above equation, we find
        \begin{align*}
             & \widetilde{Q}^{(k+1)}(s,a) = \E_{s' \sim{} p} \biggl[ r(s,a,s') - \Phi(s) + \gamma \Phi(s') + \\ &\gamma \E_{a' \sim{} \pi} \left[Q^{(k)}(s',a') -\Phi(s') - \frac{1}{\beta}\log\frac{\pi(a'|s')}{\pi_0(a'|s')} \right] \biggr]
        \end{align*}
        which simplifies to
        \begin{align*}
             & \widetilde{Q}^{(k+1)}(s,a) = \E_{s' \sim{} p} \biggl[ r(s,a,s') - \Phi(s) + \\ &\gamma \E_{a' \sim{} \pi} \left[Q^{(k)}(s',a') - \frac{1}{\beta}\log\frac{\pi(a'|s')}{\pi_0(a'|s')} \right] \biggr].
        \end{align*}
        Now, by recognizing the soft policy evaluation of $Q^{(k)}$, we have
        \begin{align*}
            \widetilde{Q}^{(k+1)}(s,a) & = Q^{(k+1)}(s,a) - \Phi(s)
        \end{align*}
        The limit $k\to \infty$ yields the desired result.
        To derive \vocabEq{} \eqref{eq:rwd shaping for non-optimal pi on v}, simply apply the definition of the state-value function given in \vocabEq{} \eqref{eq:definition of v} of the Main Text.

        For the $\epsilon$-optimality of $V^\pi$, take
        \begin{equation}
            ||\widetilde{V}^\pi - \widetilde{V}^*|| < \epsilon
        \end{equation}
        and substitute \vocabEq{} \eqref{eq:rwd shaping for non-optimal pi on v} and \vocabEq{} \eqref{eq:in thm rwd shaping v = v - phi} to obtain the desired result,
        \begin{equation}
            ||V^\pi - V^*|| < \epsilon.
        \end{equation}
        We note that we also have the stronger result that $\widetilde{V}^\pi(s) - \widetilde{V}^*(s) = V^\pi(s) - V^*(s)$ for all $s \in \s$.
    \end{proof}

    \subsection{Proof of Theorem \ref{thm:dynamics_change}}
    \begin{theorem*}
        \DynamicsChangeTheorem{app}
    \end{theorem*}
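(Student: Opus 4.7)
The plan is to mirror the inductive strategy used in the proof of Theorem~\ref{thm:rwd_change}, running the soft Bellman backup for $\widetilde{Q}^*$ under the new dynamics $q$ while carrying the candidate decomposition $\widetilde{Q}^{(N)}(s,a) = Q^*(s,a) + K^{(N)}(s,a)$ at every step. I initialize with $\widetilde{Q}^{(0)} = Q^*$ and $K^{(0)} = 0$, which makes the base case immediate. The key algebraic move is to substitute the inductive hypothesis into the soft Bellman operator for $\widetilde{\T}$ and then use the identity $\pi_0(a|s)\,e^{\beta Q^*(s,a)} = \pi^*(a|s)\,e^{\beta V^*(s)}$ (already established in the proof of Theorem~\ref{thm:rwd_change}) to convert the inner expectation over the prior policy $\pi_0$ into one over $\pi^*$. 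This pulls a clean factor of $e^{\beta V^*(s')}$ out of the log-expectation and leaves the $K^{(N)}$ contribution under $\E_{a'\sim\pi^*}$.

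Once this rewrite is performed, the inductive step reduces to showing
\begin{equation}
\widetilde{Q}^{(N+1)}(s,a) = \bigl[\, r(s,a) + \gamma\,\E_{s'\sim q} V^*(s') \,\bigr] + \frac{\gamma}{\beta}\,\E_{s'\sim q}\log\E_{a'\sim\pi^*} e^{\beta K^{(N)}(s',a')}.
\end{equation}
The crucial observation is that, because $r=r(s,a)$, the Bellman optimality equation for $Q^*$ under the old dynamics $p$ gives $Q^*(s,a) = r(s,a) + \gamma\,\E_{s'\sim p}V^*(s')$, so adding and subtracting $\gamma\,\E_{s'\sim p}V^*(s')$ inside the bracket identifies it as $Q^*(s,a)+\kappa(s,a)$ via the definition of $\kappa$. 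The remaining log-expectation term is exactly one step of the $K$-backup in \vocabEq{}~\eqref{eq:K_p_backup2}, so I conclude $\widetilde{Q}^{(N+1)} = Q^* + K^{(N+1)}$. Taking $N\to\infty$ and invoking the contractivity of the soft Bellman operator (valid here because $\pi^*>0$, so the backup defining $K$ is itself a well-posed soft Bellman operator for a task with reward $\kappa$ and prior policy $\pi^*$) yields \vocabEq{}~\eqref{eq:dynamics Q=Q+K}.

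The state-value identity \vocabEq{}~\eqref{eq:dynamics V=V+K} and policy identity \vocabEq{}~\eqref{eq:dynamics pi=piK} then follow by exactly the algebraic manipulations used in the proof of Theorem~\ref{thm:rwd_change}: inserting $\widetilde{Q}^* = Q^* + K^*$ into the definitions of $\widetilde{V}^*$ and $\widetilde{\pi}^*$, then using the $\pi^*/\pi_0/V^*$ identity once more to re-express the normalizing sum as $e^{\beta V^*(s)}\sum_a \pi^*(a|s)\,e^{\beta K^*(s,a)}$ and recognizing $V_K^*$. The hardest part will be the careful bookkeeping in the inductive step---specifically, ensuring that the reward assumption $r=r(s,a)$ is used at precisely the right moment so that $r(s,a)$ can be pulled outside the $s'$-expectation under $q$ and combined cleanly with $\gamma\,\E_{s'\sim q}V^*(s')$ to assemble $\kappa$. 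This is the only substantive place where the dynamics-change proof diverges from the reward-change proof, and an $r=r(s,a,s')$ would obstruct the factoring because one would have an additional reward mismatch term $\E_{s'\sim q}[r(s,a,s')]-\E_{s'\sim p}[r(s,a,s')]$ entangled with the value-function discrepancy that defines $\kappa$.
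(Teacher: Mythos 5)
Your proposal is correct and follows essentially the same route as the paper's proof: the same inductive decomposition $\widetilde{Q}^{(N)} = Q^* + K^{(N)}$, the same use of the identity $\pi_0(a|s)e^{\beta Q^*(s,a)} = \pi^*(a|s)e^{\beta V^*(s)}$ to switch the inner expectation to $\pi^*$ and extract $V^*(s')$, and the same add-and-subtract of $\gamma\,\E_{s'\sim p}V^*(s')$ to reassemble $Q^*$ and identify $\kappa$. Your closing remark correctly pinpoints why the assumption $r = r(s,a)$ is needed, which the paper's proof uses implicitly but does not spell out.
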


    \begin{proof}
        Again, we prove by induction
        \begin{equation}
            \widetilde{Q}^{(N)}(s,a) = Q^*(s,a) + K^{(N)}(s,a)
        \end{equation}
        with the understanding that $Q^*$ (resp. $\widetilde{Q}^*$) is the optimal value function under dynamics $p$ (resp. $q$). Starting in the same way as the previous proofs (instantiating with the base case and applying the inductive assumption), we have
        \begin{align*}
             & \widetilde{Q}^{(N+1)}(s,a) =  r(s,a) + \\ &\E_{s' \sim{} q } \biggl[\frac{\gamma}{\beta} \log \E_{a' \sim{} \pi_0} \exp \left( \beta Q^*(s',a') + \beta K^{(N)}(s',a')\right) \biggr]
        \end{align*}
        similar as before this leads to
        \begin{align*}
             & \widetilde{Q}^{(N+1)}(s,a) = r(s,a) + \\ &\E_{s' \sim{} q} \biggl[\frac{\gamma}{\beta} \left[ \beta V^*(s') + \log \E_{a' \sim{} \pi^*} \exp \left(\beta K^{(N)}(s',a')\right) \right] \biggr]
        \end{align*}
        Here we split up the expectation over $V^*$ as an expectation over the old dynamics $p$ (for which $V^*$ is optimal) and the new dynamics $q$. This allows us to recombine $r$ with $V^*$ (using $Q^*(s,a) = r(s,a) + \gamma \E_{s' \sim{} p} V^*(s')$):
        \begin{align*}
            \widetilde{Q}^{(N+1)}(s,a) & = Q^*(s,a) + \gamma \left( \E_{s' \sim{} q} - \E_{s' \sim{} p} \right) V^*(s') + \\ &\frac{\gamma}{\beta} \E_{s' \sim{} q} \log \E_{a' \sim{} \pi^*} \exp \left(\beta K^{(N)}(s',a')\right)
        \end{align*}

        This allows us to again read off the backup \vocabEq{} for the corrective value function ($K$) as:
        \begin{align*}\label{eq:K_p_backup}
            K^{(N+1)}(s,a) & = \kappa(s,a) + \\ &\frac{\gamma}{\beta} \E_{s' \sim{} q} \log \E_{a' \sim{} \pi^*}\exp \left( \beta K^{(N)}(s',a')\right)
        \end{align*}
        with rewards $\kappa$ given by
        \begin{equation}
            \kappa(s,a) = \gamma \E_{s' \sim{} q} V^*(s') - \gamma \E_{s' \sim{} p} V^*(s').
        \end{equation}
        Equations \eqref{eq:dynamics V=V+K} and \eqref{eq:dynamics pi=piK} follow from their definitions, similar to the proof of Theorem \ref{thm:rwd_change}.
    \end{proof}

    \subsection{Proof of Theorem \ref{thm:dynamics free soln}}
    \begin{theorem*}
        \FreeSolutionsTheorem{app}
    \end{theorem*}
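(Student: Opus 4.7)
The plan is to verify directly that $Q^*$, the optimal value function for the original task $\T$, is itself a fixed point of the soft Bellman optimality operator associated with $\bar{\T}$. Since the soft Bellman operator is a contraction (as cited in the preliminaries) its fixed point is unique, so exhibiting $Q^*$ as a fixed point suffices to conclude $\bar{Q}^* = Q^*$. The remaining identities $\bar{V}^* = V^*$ and $\bar{\pi}^* = \pi^*$ then follow immediately from the definitions of the soft state-value function and the soft-max policy in terms of the optimal action-value function and the shared prior $\pi_0$.

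First, I would write the soft Bellman optimality equation for $\bar{\T}$ evaluated at $Q^*$:
\[
(\mathcal{B}_{\bar{\T}} Q^*)(s,a) = \E_{s' \sim q}\!\left[\bar{r}(s,a) + \frac{\gamma}{\beta}\log \E_{a' \sim \pi_0} e^{\beta Q^*(s',a')}\right].
\]
Since $\bar{r}(s,a)$ does not depend on $s'$, it pulls out of the outer expectation. Next, I would substitute the definition $V^*(s') \doteq \tfrac{1}{\beta}\log \E_{a'\sim\pi_0}e^{\beta Q^*(s',a')}$ into the log-sum-exp term, so the right-hand side becomes
\[
r(s,a) - \gamma \E_{s'\sim q}V^*(s') + \gamma \E_{s'\sim p}V^*(s') + \gamma \E_{s'\sim q} V^*(s').
\]

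The two $\E_{s'\sim q}V^*(s')$ terms then cancel, leaving $r(s,a) + \gamma \E_{s'\sim p}V^*(s')$, which is precisely $Q^*(s,a)$ by the soft Bellman optimality equation satisfied by $Q^*$ in the original dynamics $p$ (here I use the assumption $r=r(s,a)$ to align with the form used in Theorem~\ref{thm:dynamics_change}). Hence $Q^* = \mathcal{B}_{\bar{\T}} Q^*$, so by uniqueness of the fixed point, $\bar{Q}^* = Q^*$. Plugging this into the definitions of $\bar{V}^*$ and $\bar{\pi}^*$ (which depend only on $\bar{Q}^*$ and the shared prior $\pi_0$) yields the two remaining claims.

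There is really no hard step here: the result is essentially a calculation tailored to make Theorem~\ref{thm:dynamics_change}'s corrective backup collapse. A slightly more structural alternative proof would invoke Theorem~\ref{thm:dynamics_change} directly: the reward shift $-\gamma \E_{s'\sim q}V^*(s') + \gamma \E_{s'\sim p}V^*(s')$ is chosen exactly to cancel the reward $\kappa(s,a) = \gamma \E_{s' \sim q}V^*(s') - \gamma \E_{s' \sim p}V^*(s')$ appearing in \vocabEq{}~\eqref{eq:K_p_backup2}, so the corrective value function $K^*$ satisfies a Bellman equation with zero reward and hence vanishes identically; then \vocabEq{}s~\eqref{eq:dynamics Q=Q+K}--\eqref{eq:dynamics pi=piK} give the claim. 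I would mention this as a conceptual remark but carry out the direct verification, since it is short and self-contained. The only subtlety worth flagging is that $\bar{r}$ depends on $q$ and on the \emph{original} task's $V^*$, so the construction is not circular: $V^*$ is fixed data from $\T$, and the new task is built around it.
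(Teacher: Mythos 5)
Your proof is correct, but it takes a genuinely different route from the paper's. The paper proves this result by a ``change of perspective'' between its two corrective-value-function theorems: it first applies Theorem~\ref{thm:dynamics_change} to the pair $(\T,\widetilde{\T})$ with shared reward $r$ and dynamics $p$ versus $q$, obtaining $\widetilde{Q}^* = Q^* + K^*$, and then re-reads the \emph{same} pair $(\widetilde{Q}^*, K^*)$ through the lens of Theorem~\ref{thm:rwd_change} as a reward-varying decomposition under dynamics $q$, which forces $\bar{Q}^* = \widetilde{Q}^* - K^* = Q^*$ for the task with reward $\bar{r} = r - \kappa$. Your approach instead verifies directly that $Q^*$ is a fixed point of the soft Bellman optimality operator of $\bar{\T}$ and invokes uniqueness of the fixed point via the contraction property; the cancellation of the two $\E_{s'\sim q}V^*(s')$ terms is exactly right, and the reduction to $r(s,a) + \gamma\E_{s'\sim p}V^*(s') = Q^*(s,a)$ correctly uses the hypothesis $r=r(s,a)$. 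What the paper's route buys is narrative coherence --- the theorem is presented as a corollary of the two transfer theorems, reinforcing the ``same $K^*$, two decompositions'' viewpoint emphasized in the Discussion. What your route buys is self-containment and the avoidance of a subtlety the paper glosses over: identifying the $K^*$ of Theorem~\ref{thm:dynamics_change} with the $K^*$ of Theorem~\ref{thm:rwd_change} requires that the two reference tasks share the same optimal prior policy ($\bar{\pi}^* = \pi^*$), which is itself part of the conclusion. Your direct verification, and the structural remark you append (that $\bar{r}$ is chosen to annihilate $\kappa$ so the corrective Bellman equation has zero reward and $K^*\equiv 0$), sidestep that circularity entirely.
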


    \begin{proof}
        Consider the task $\T=\langle \s,\A,p,r,\gamma, \beta, \pi_0 \rangle$ with optimal action-value function $Q^*$, and consider an intermediate \vocabDYNchange{} task $\widetilde{\T}=\langle \s,\A,q,r,\gamma, \beta, \pi_0 \rangle$ with optimal action-value function $\widetilde{Q}^*$. Together $Q^*$ and $\widetilde{Q}^*$ obey Theorem \ref{thm:dynamics_change}:
        \begin{equation}\label{eq:thm11_pf1}
            \widetilde{Q}^*(s,a) = Q^*(s,a) + K^*(s,a)
        \end{equation}
        where $K^*$ is the corresponding corrective value function which allows for the change in dynamics $p \to q$. From an alternate perspective, we shall again consider the same $\widetilde{Q}^*$ and $K^*$ (notably, their corresponding task definitions have the same dynamics $q$) in light of Theorem \ref{thm:rwd_change}:
        \begin{equation}\label{eq:thm11_pf2}
            \widetilde{Q}^*(s,a) = \bar{Q}^*(s,a) + K^*(s,a)
        \end{equation}
        where $\bar{Q}$ is the optimal action value function for a task $\bar{\T} = \langle \s,\A,q,\bar{r},\gamma, \beta, \pi_0 \rangle$ with rewards given by
        \begin{align*}
            \bar{r}(s,a) & = \widetilde{r}(s,a) - \kappa(s,a)                                            \\
                         & = r(s,a) - \gamma \E_{s' \sim{} q} V^*(s') + \gamma \E_{s' \sim{} p} V^*(s').
        \end{align*}
        Since $\widetilde{Q}^*$ and $K^*$ are the same in \vocabEq{} \eqref{eq:thm11_pf1} and \eqref{eq:thm11_pf2}, it immediately follows that the two tasks have the same optimal $Q$-function: $\bar{Q}^*(s,a) = Q^*(s,a)$.
    \end{proof}

    \subsection{Proof of Theorem \ref{thm:composition}}
    \begin{theorem*}
        \CompositionTheorem{app}
    \end{theorem*}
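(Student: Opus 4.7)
The plan is to mirror the inductive strategy used in the proof of Theorem \ref{thm:rwd_change}, with $f(\{Q^{(m)}(s,a)\})$ playing the role previously occupied by $Q^*(s,a)$, and the policy $\pi_f$ (together with its state-value $V_f$) playing the role of $\pi^*$ (with $V^*$). Concretely, I would initialize $\widetilde{Q}^{(0)}(s,a) = f(\{Q^{(m)}(s,a)\})$ and $K^{(0)} \equiv 0$, then prove by induction on $N$ that
\begin{equation*}
    \widetilde{Q}^{(N)}(s,a) = f(\{Q^{(m)}(s,a)\}) + K^{(N)}(s,a),
\end{equation*}
where $\widetilde{Q}^{(N)}$ is produced by the soft Bellman backup for the composed task (reward $f(\{r^{(m)}\})$, prior $\pi_0$) and $K^{(N)}$ by the backup implicit in Equation \eqref{eq:K in composition}. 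Taking the limit $N\to\infty$ and invoking contractivity of both soft Bellman operators then yields Equation \eqref{eq:Q=fQ+K}.

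The key algebraic step is the analogue of Equation \eqref{eq:useful_relation_piQV}: the definition of $\pi_f$ directly gives
\begin{equation*}
    \frac{\pi_f(a\vert s)}{\pi_0(a\vert s)}\, e^{\beta V_f(s)} = e^{\beta f(\{Q^{(m)}(s,a)\})}.
\end{equation*}
After plugging the inductive assumption into the $\widetilde{Q}^{(N+1)}$ backup, I would use this identity to rewrite $\exp(\beta f(\{Q^{(m)}\}))$ as $(\pi_f/\pi_0)\, e^{\beta V_f}$ inside the $\log\E_{a'\sim\pi_0}$, which simultaneously changes the sampling measure from $\pi_0$ to $\pi_f$ (the $\pi_0$ factors cancel) and extracts a $\gamma V_f(s')$ term. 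Combining the resulting $f(\{r^{(m)}(s,a,s')\}) + \gamma V_f(s')$ with a subtracted-and-added $f(\{Q^{(m)}(s,a)\})$ produces precisely $\kappa(s,a,s') + f(\{Q^{(m)}(s,a)\})$, and the remaining expectation over $\pi_f$ of $e^{\beta K^{(N)}}$ completes the $K^{(N+1)}$ backup. I would also note that stopping one step earlier, before swapping $\pi_0$ for $\pi_f$, produces an offline variant of the $K$ backup that reuses trajectories collected under $\pi_0$ with re-labelled rewards, exactly as in the remark following Theorem \ref{thm:rwd_change}.

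For the remaining identities $\widetilde{V}^*(s) = V_f(s) + V_K^*(s)$ and $\widetilde{\pi}^*(a\vert s) = \pi_K^*(a\vert s)$, I would substitute the decomposition of $\widetilde{Q}^*$ into the definitions of $\widetilde{V}^*$ (Equation \eqref{eq:definition of v}) and $\widetilde{\pi}^*$, and re-apply the same $\pi_0 e^{\beta f(\{Q^{(m)}\})} = \pi_f e^{\beta V_f}$ substitution to absorb the $f(\{Q^{(m)}\})$ contribution into a change of sampling measure, leaving $V_K^*$ and $\pi_K^*$ respectively; this parallels the closing paragraphs of the proof of Theorem \ref{thm:rwd_change}. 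The main obstacle is purely bookkeeping: carefully tracking the arguments of $f$ across the substitutions and verifying that no assumption of linearity or convexity of $f$ enters (only boundedness is used, so that all soft Bellman operators remain contractions). Conceptually, everything reduces to the same exponential-change-of-measure trick that drove the proof of Theorem \ref{thm:rwd_change}.
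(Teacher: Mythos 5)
Your proposal is correct and follows essentially the same route as the paper's proof: the same inductive decomposition $\widetilde{Q}^{(N)} = f(\{Q^{(m)}\}) + K^{(N)}$, the same change of measure from $\pi_0$ to $\pi_f$ via the identity $\pi_0 e^{\beta f(\{Q^{(m)}\})} = \pi_f e^{\beta V_f}$, the same offline-variant observation, and the same treatment of $\widetilde{V}^*$ and $\widetilde{\pi}^*$. No substantive differences to report.
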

    \begin{proof}
        Similar to the previous proofs, we will prove the following equation by induction
        \begin{equation}\label{eq:inductive_pf}
            \widetilde{Q}^{(N)}(s,a) = f(Q(s,a)) + K^{(N)}(s,a)
        \end{equation}
        (notice that we shall drop brackets and indices around the $Q$ and $r$ task-wise sets for brevity). 
        Writing the Bellman backup for $\widetilde{Q}$ and inserting the inductive assumption,

        \begin{align*}
             & \widetilde{Q}^{(N+1)}(s,a) = \E_{s' \sim{} p} \biggl[ f(r(s,a,s')) + \\ &\frac{\gamma}{\beta} \log \E_{a'\sim{} \pi_0} \exp \beta \left( f(Q(s',a'))+K^{(N)}(s',a')\right) \biggr]
        \end{align*}
        Again, by stopping here we obtain the offline result.
        Next we insert a state-value function, $V_f$, as defined in the theorem statement, so as to introduce the policy $\pi_f$, also defined in the theorem statement:
        \begin{align*}
            \widetilde{Q}^{(N+1)}(s,a) & = \E_{s' \sim{} p} \biggl[ f(r(s,a,s')) + \gamma \E_{s' \sim{} p} V_f(s') + \\ &\frac{\gamma}{\beta} \log \E_{a'\sim{} \pi_f} \exp \beta K^{(N)}(s',a') \biggr]
        \end{align*}

        \noindent In applying the backup equation for $K$,

        \begin{align*}
            K^{(N+1)}(s,a) & =  \E_{s' \sim{} p} \biggl[ \kappa(s,a,s') + \\ &\frac{\gamma}{\beta} \log \E_{a' \sim{} \pi_f}\exp \left( \beta K^{(N)}(s',a')\right) \biggr]
        \end{align*}

        \noindent we see that

        \begin{equation}\label{eq:Q=fQ+K in proof}
            \widetilde{Q}^{(N+1)}(s,a) = f(\{Q^*(s,a)\}) + K^{(N+1)}(s,a).
        \end{equation}
        Since the Bellman backup equation converges to the optimal value function: $\lim_{N \to \infty} \widetilde{Q}^{(N)} = \widetilde{Q}^*$ and $\lim_{N \to \infty} K^{(N)}~=~K^*$, this completes the proof of \vocabEq{} \eqref{eq:Q=fQ+K}. From exponentiation of \vocabEq{} \eqref{eq:Q=fQ+K} we have
        \begin{equation}
            e^{\beta \widetilde{V}^*(s)} = \sum_a \pi_0(a|s) e^{\beta f(\{Q^*(s,a)\})} e^{\beta K^*(s,a)}
        \end{equation}\label{eq:vtilda in comp proof}
        Proceeding in a similar fashion to the proof of Theorem \ref{thm:rwd_change}, we now use the relation
        \begin{equation}
            \frac{\pi_f(a|s)}{\pi_0(a|s)} = \frac{e^{\beta f(\{Q^*(s,a)\})}}{e^{\beta V_f(s)}}
        \end{equation}
        and the definitions of $\pi_f$ and $V_f$ in the preceding equation for $\widetilde{V}^*$, which gives
        \begin{equation}
            e^{\beta \widetilde{V}^*(s)} = e^{\beta V_f(s)} \sum_a \pi_f(a|s)  e^{\beta K^*(s,a)}
        \end{equation}
        Since $\pi_f$ corresponds to the prior policy for the task with optimal value function $K^*$, this leads immediately to
        \begin{equation}
            \widetilde{V}^*(s) = V_f(s) + V_K^*(s).
        \end{equation}
        We omit the proof of $\widetilde{\pi}^*=\pi_K^*$, which is similar to the analogous result in Theorem~\ref{thm:rwd_change}.
    \end{proof}
\fi


\section*{Acknowledgments}
The authors would like to thank the anonymous reviewers for their helpful comments and suggestions.
This work was supported by the National Science Foundation through Award DMS-1854350,
the Proposal Development Grant provided by the University of Massachusetts Boston,
the Research Foundation at San Jose State University, and the Alliance Innovation Lab in Silicon Valley.
\bibliography{aaai23}

\end{document}